\newtheorem{theorem}{Theorem}[section]
\newtheorem{lemma}[theorem]{Lemma}
\newtheorem{corollary}[theorem]{Corollary}
\theoremstyle{definition}
\newtheorem{definition}[theorem]{Definition}
\newtheorem{assumption}[theorem]{Assumption}
\newtheorem{example}[theorem]{Example}
\theoremstyle{remark}
\newtheorem{remark}[theorem]{Remark}
\title{A Generalization Result for Convergence in Learning-to-Optimize}
\author{\name Michael Sucker \email michael.sucker@math.uni-tuebingen.de \\
        \addr Department of Mathematics \\
        University of T\"ubingen\\
        T\"ubingen, Germany
        \AND
        \name Peter Ochs \email ochs@cs.uni-saarland.de \\
        \addr Department of Mathematics and Computer Science\\
        Saarland University\\
        Saarbr\"ucken, Germany}
\begin{document}

\maketitle

\begin{abstract}%
Learning-to-optimize leverages machine learning to accelerate optimization algorithms. While empirical results show tremendous improvements compared to classical optimization algorithms, theoretical guarantees are mostly lacking, such that the outcome cannot be reliably assured. Especially, convergence is hardly studied in learning-to-optimize, because conventional convergence guarantees in optimization are based on geometric arguments, which cannot be applied easily to learned algorithms. Thus, we develop a probabilistic framework that resembles classical optimization and allows for transferring geometric arguments into learning-to-optimize. Based on our new proof-strategy, our main theorem is a generalization result for parametric classes of potentially non-smooth, non-convex loss functions and establishes the convergence of learned optimization algorithms to critical points with high probability. This effectively generalizes the results of a worst-case analysis into a probabilistic framework, and frees the design of the learned algorithm from using safeguards.
\end{abstract}

\section{Introduction}

Learning-to-optimize utilizes machine learning techniques to tailor an optimization algorithm to a concrete family of optimization problems with similar structure. While this often leads to enormous gains in performance for problems similar to the ones during training, the learned algorithm might completely fail for others. Thus, to have \emph{trustworthy} algorithms, guarantees are needed. In optimization, the best way to show that the algorithm behaves correctly is by proving its convergence to a critical point. In learning-to-optimize, however, proving convergence is a hard and long-standing problem. This is due to the fact that the problem instances are \emph{functions}, which cannot be observed globally. Rather, the region explored during training is strongly influenced by the chosen initialization and the maximal number of iterations. This begs a fundamental problem for the theoretical analysis: 
\begin{center}
    It typically prevents the usage of both limits and the mathematical argument of induction.
\end{center}
As convergence is, by definition, tied to the notion of limits, this subtlety prevents proving that the learned algorithm converges. A way to mitigate this problem rather easily is the usage of safeguards: The update step of the algorithm is restricted to such an extent that it can be analyzed similar to a hand-crafted algorithm, \emph{independently} of the training. Yet, this comes at a price: 
\begin{center}
    Not only the analysis of the learned algorithm, also its performance is restricted and eventually similar to hand-crafted algorithms.
\end{center} 
Intuitively, the degradation in performance can be explained by the fact that this approach attempts to directly apply traditional convergence results, which are not well-suited for learning-to-optimize. Instead, we advocate for taking a new perspective, in which we make more use of our greatest advantage compared to traditional optimization: 
\begin{center}
    We can actually \emph{observe} the algorithm during training.
\end{center}
Thus, we present a new proof-strategy that allows us to derive convergence in learning-to-optimize by means of \emph{generalization}. The core idea is to show that the properties of the trajectory, which are needed to deduce convergence of the algorithm, actually generalize to unseen problems (test cases). To demonstrate this, we combine a general convergence result from variational analysis with a PAC-Bayesian generalization theorem. This results in our main theorem, which is applicable in a (possibly) non-smooth non-convex optimization setup, and lower bounds the probability to observe a trajectory, generated by the learned algorithm, that converges to a critical point of the loss function. Here, we want to emphasize that, while we derive a convergence result for learning-to-optimize, the idea is not restricted to optimization. Rather, it applies more generally to sequential prediction models that exhibit a Markovian structure.

\section{Related Work}

This work draws on the fields of learning-to-optimize, the PAC-Bayesian learning approach, and convergence results based on the Kurdyka-\L ojasiewicz property. For an introduction to learning-to-optimize, \citet{Chen_Chen_Chen_Heaton_Liu_Wang_Yin_2022} provide a good overview about the variety of approaches. Similarly, for the PAC-Bayesian approach, good introductory references are given by \citet{Guedj_2019}, \citet{Alquier_2024}, and \citet{Hellstroem_Durisi_Guedj_Raginsky_2023}, and for the usage of the Kurdyka-\L ojasiewicz property, we refer to \citet{Attouch_Bolte_Svaiter_2013}.


\paragraph{Learning-to-Optimize with Guarantees.} To date, learned optimization methods show impressive performance, yet lack theoretical guarantees \citep{Chen_Chen_Chen_Heaton_Liu_Wang_Yin_2022}. However, in some applications convergence guarantees are indispensable: It was shown that learning-based methods might fail to reconstruct the crucial details in a medical image \citep{Moeller_Moellenhoff_Cremers_2019}. In the same work, the authors prove convergence of their learned method by restricting the update to descent directions. Similar safeguarding techniques were employed by \citet{PremontSchwarz_Vitkru_Feyereisl_2022} and \citet{Heaton_Chen_Wang_Yin_2023}. The basic idea is to constrain the learned object in such a way that known convergence results are applicable, and it has been applied successfully for different schemes and under different assumptions \citep{Sreehari_Venkatakrishnan_Wohlberg_Buzzard_Drummy_Simmons_Bouman_2016, Chan_Wang_Elgendy_2016, Teodoro_BioucasDias_Figueiredo_2017, Tirer_Giryes_2018, Buzzard_Chan_Sreehari_Bouman_2018, Ryu_Liu_Wang_Chen_Wang_Yin_2019, Sun_Wohlberg_Kamilov_2019, Terris_Repetti_Pesquet_Wiaux_2021, Cohen_Elad_Milanfar_2021}. 
A major advantage of these \say{constrained} methods is the fact that the number of iterations is not restricted a priori and that, often, some convergence guarantees can be provided. A major drawback, however, is their severe restriction: Typically, the update-step has to satisfy certain geometric properties, and the results only apply to specific algorithms and/or problems. Another approach, pioneered by \citet{Gregor_LeCun_2010}, is unrolling, which limits the number of iterations, yet can be applied to any iterative algorithm. Here, the IHT algorithm is studied by \citet{Xin_Wang_Gao_Wipf_Wang_2016} while \citet{Chen_Liu_Wang_Yin_2018} consider the unrolled ISTA. However, in the theoretical analysis of unrolled algorithms, the notion of convergence itself is difficult, and one rather has to consider the generalization performance: This has been done by means of Rademacher complexity \citep{Chen_Zhang_Reisinger_Song_2020}, by using a stability analysis \citep{KEKP2020}, or in terms of PAC-Bayesian generalization guarantees \citep{Sucker_Ochs_2023, Sucker_Fadili_Ochs_2024}. Recently, generalization guarantees based on the whole trajectory of the algorithm, for example, the expected time to reach the stopping criterion, have been proposed \citep{Sucker_Ochs_2024_Markov}. The main drawback of generalization guarantees is their reliance on a specific distribution. To solve this, another line of work studies the design of learned optimization algorithms and their training, and how it affects the possible guarantees \citep{Wichrowska_Maheswaranathan_Hoffman_Colmenarejo_Denil_DeFreitas_SohlDickstein_2017, Metz_Maheswaranathan_Nixon_Freeman_SohlDickstein_2019, Metz_Freeman_Harrison_Maheswaranathan_SohlDickstein_2022}. Here, \citet{Liu_Chen_Wang_Yin_Cai_2023} identify common properties of basic optimization algorithms and propose a math-inspired architecture. Similarly, \citet{Castera_Ochs_2024} analyze widely used optimization algorithms, extract common geometric properties from them, and provide design-principles for learning-to-optimize. 

\paragraph{PAC-Bayesian Generalization Bounds.}
The PAC-Bayesian framework allows for giving high probability bounds on the risk. The key ingredient is a change-of-measure inequality, which determines the divergence or distance in the resulting bound. While most bounds involve the Kullback--Leibler divergence as measure of proximity \citep{Mc2003_1, Mc2003_2, Seeger_2002, Langford_ShaweTaylor_2002, Catoni_2004, Catoni_2007, Germain_Lacasse_Laviolette_Marchand_2009}, more recently other divergences have been used \citep{Honorio_Jaakkola_2014, London_2017, Begin_Germain_Laviolette_Roy_2016, Alquier_Guedj_2018, Ohnishi_Honorio_2021, Amit_Epstein_Moran_Meir_2022, Haddouche_Guedj_2023}. In doing so, the PAC-bound relates the true risk to other terms such as the empirical risk. Yet, it does not directly say anything about the absolute numbers. Therefore, one typically aims to minimize the provided upper bound \citep{Langford_Caruana_2001, Dziugaite_Roy_2017, PRSS2021, Thiemann_Igel_Wintenberger_Seldin_2017}.
Nevertheless, a known difficulty in PAC-Bayesian learning is the choice of the prior distribution, which strongly influences the performance of the learned models and the theoretical guarantees \citep{Catoni_2004, Dziugaite_Hsu_Gharbieh_Arpino_Roy_2021, PRSS2021}. In part, this is due to the fact that the divergence term can dominate the bound, such that the posterior is close to the prior. Especially, this applies to the Kullback-Leibler divergence, and lead to the idea of choosing a data- or distribution-dependent prior \citep{Seeger_2002, ParradoHernandez_Ambroladze_ShaweTaylor_Sun_2012, Lever_Laviolette_ShaweTaylor_2013, Dziugaite_Roy_2018, PRSS2021}. 

\paragraph{The Kurdyka-\L ojasiewicz inequality.}
Single-point convergence of the trajectory of an algorithm is a challenging problem, especially in non-smooth non-convex optimization. For example, \citet{Absil_Mahony_Andrews_2005} show that this might fail even for simple algorithms like gradient descent on highly smooth functions. Further, they show that a remedy is provided by the \emph{\L ojasiewicz inequality}, which holds for real analytic functions \citep{Bierstone_Milman_1988}. The large class of tame functions or definable functions excludes many pathological failure cases, and extensions of the \L ojasiewicz inequality to smooth definable functions are provided by \citet{Kurdyka_1998}. Similarly, extensions to the nonsmooth subanalytic or definable setting are shown by \citet{Bolte_Daniilidis_Lewis_Shiota_2007}, \citet{Bolte_Daniilidis_Lewis_2007}, and \citet{Attouch_Bolte_2009}, which yields the \emph{Kurdyka–\L ojasiewicz inequality}.  
It is important to note that most functions in practice are definable and thus satisfy the Kurdyka–\L ojasiewicz inequality automatically. Using this, several algorithms have been shown to converge even for nonconvex functions \citep{Attouch_Bolte_2009, Attouch_Bolte_Redont_Soubeyran_2010, Attouch_Bolte_Svaiter_2013, Bolte_Sabach_Teboulle_2014, Ochs_Chen_Brox_Pock_2014, Ochs_2019}.

\section{Contributions}

\begin{itemize}
    \item We present a novel approach for deducing the convergence of a generic learned algorithm with high probability. In doing so, we effectively generalize the results of a worst-case convergence analysis into a probabilistic setting. Furthermore, the methodology does not restrict the design of the algorithm and is widely applicable, that is, it can also be used for other sequential prediction models that exhibit a Markovian structure.
    \item To showcase the idea, we combine the PAC-Bayesian generalization theorem provided by \citet{Sucker_Ochs_2024_Markov} with the abstract convergence theorem provided by \citet{Attouch_Bolte_Svaiter_2013} to derive a new convergence result for our learned optimization algorithm on (possibly) non-smooth non-convex loss-functions. In doing so, we bring together highly advanced tools from non-smooth non-convex optimization, stochastic process theory, and PAC-Bayesian learning theory, and effectively solve a long-standing problem of learning-to-optimize, namely how to obtain convergence guarantees \emph{without} limiting the design of the algorithm.
    \item We conduct two experiments to show the validity of our claims: We use a neural-network based iterative optimization algorithm to a) solve quadratic problems and b) to train another neural network. In both cases, the learned algorithm outperforms the baseline and converges to a critical point with high probability.
\end{itemize}

\section{Simplified Key Idea}\label{section:key_idea}

Before detailing the setup for learning-to-optimize, we shortly (and informally) present the main underlying idea of our proof-strategy, which otherwise might be obscured by the details: Given an object $x$ and properties $a$, $b$, and $c$, we are interested in the implication 
$$
\text{$x$ satisfies $a \wedge b$} \implies \text{$x$ satisfies $c$}\,. 
$$
Whenever a single object $x$ has properties $a$ and $b$, we are \emph{sure} that $x$ also possesses $c$. However, given a collection of objects $\{x_1, x_2, ...\}$, if the properties $a$ and $b$ only hold for some of these objects, the traditional \say{implication} is invalid for the collection, and the language of probability theory seems more appropriate: Here, $a$, $b$ and $c$ have to be rephrased as \emph{sets} $\set{A} := \{x : \text{$x$ has property $a$}\}$, $\set{B} := \{x : \text{$x$ has property $b$}\}$, and $\set{C} := \{x : \text{$x$ has property $c$}\}$, such that the implication translates into an inclusion:
$$
    \set{A} \cap \set{B} = \{x : \text{$x$ satisfies $a \wedge b$}\} \subset \{x : \text{$x$ satisfies $c$}\} = \set{C} \,.
$$
This enables a more fine-grained result: If we are given a probability measure $\mu$ over objects $x$, we can always conclude that $\mu \{ \set{A} \cap \set{B} \} \le \mu \{\set{C}\}$, that is, it is more likely to observe an object $x$ with property $c$ than to observe an object with properties $a$ and $b$. Furthermore, if $\mu \{ \set{A} \cap \set{B} \} = 1$, we deduce that $c$ holds \emph{almost surely}. 

Most of the time, however, calculating $\mu\{\set{A} \cap \set{B}\}$ is infeasible, so that it needs to be estimated on a data set. In this case, two questions arise: 
\begin{itemize}
    \item[(i)] Is the estimate \emph{representative} for unseen data?
    \item[(ii)] Why do we not simply estimate $\mu\{\set{C}\}$ directly?
\end{itemize} 
The first question can be answered in terms of a generalization result. By contrast, the second question can be more subtle: \emph{If} the property $c$ is observable, estimating $\mu\{\set{C}\}$ should be preferred. However, this is not always possible. In our case, for example, the objects $x$ will be whole sequences, the property $c$ will be convergence to a critical point, and $\mu$ will be the distribution of a Markov process generated by the algorithm. Thus, without further assumptions it is practically impossible to observe property $c$ directly, because convergence of a sequence is a so-called \emph{asymptotic event}, which belongs to the tail-$\sigma$-algebra, that is, it does not depend on any finite number of iterates and therefore cannot be observed.

To summarize: Ultimately, we are interested in how likely it is to observe an object $x$ that possesses property $c$ (here: a sequence generated by the learned algorithm that converges to a stationary point). For this, we need at least the distribution $\mu$ of the objects $x$ under consideration (see Theorem~\ref{Thm:gen_property_trajectory}). Additionally, since property $c$ is unobservable, we resort to properties $a$ and $b$, which imply $c$ (see Theorem~\ref{Thm:convergence_to_stationary_point}). Then, to be able to assign probabilities to these properties, we need to translate them into measurable sets in the appropriate space (see Section~\ref{subsection:measurability}). This allows for estimating the probability to observe objects $x$ that possess $a$ and $b$, which in turn is a lower bound on how many objects possess $c$. Finally, since this estimate depends on the training data, we need to make sure that it also generalizes to unseen problems (see Theorem~\ref{Thm:convergence_C1_functions}).

\section{Notation}

We write generic sets in type-writer font, for example, $\set{A} \subset \R^d$, and generic spaces in script-font, for example, $\genSpaceOne$. Given a metric space $\genSpaceOne$, $\openBall{\varepsilon}{x}$ denotes the open ball around $\genRealOne \in \genSpaceOne$ with radius $\varepsilon > 0$, and we assume every metric space to be endowed with the metric topology and corresponding Borel $\sigma$-field $\borelalgebra{\genSpaceOne}$. Similarly, given a product space $\genSpaceOne \times \genSpaceTwo$, the product $\sigma$-algebra is denoted by $\borelalgebra{\genSpaceOne} \otimes \borelalgebra{\genSpaceTwo}$.
We consider the space $\R^d$ with Euclidean norm $\norm{\cdot}{}$ and, for notational simplicity, abbreviate $\stateSpace := \R^d$ and $\parSpace := \R^q$. The space of sequences in $\stateSpace$ is denoted by $\trajSpace$, and we endow it with the product $\sigma$-algebra, which is the smallest $\sigma$-algebra, such that all canonical projections $\iterProj{i}: \trajSpace \to \stateSpace, \ (\iter{\realState}{\timeIdx})_{\timeIdx \in \N_0} \mapsto \iter{\realState}{i}$, are measurable. 
For notions from non-smooth analysis, we follow \citet{Rockafellar_Wets_2009}. 
In short, a function $f:\stateSpace \to \pExtR$ is called \emph{proper}, if $f(\realState) < +\infty$ for at least one $\realState \in \stateSpace$, and we denote its \emph{effective domain} by $\effDom \ f$. Further, it is called \emph{lower semi-continuous}, if $\liminf_{\realState \to \Bar{\realState}} f(\realState) \ge f(\Bar{\realState})$ for all $\Bar{\realState} \in \stateSpace$. Furthermore, for $\realState \in \effDom \ f$, the (limiting) \emph{subdifferential} of $f$ at $\realState$ is denoted by $\subdiff f(\realState)$. Similarly, for $f: \stateSpace \times \parSpace \to \pExtR$, $\subdiff_1 f(\realState_1, \realState_2)$ denotes the (partial) subdifferential of $f(\cdot, \realState_2)$ at $\realState_1$. In general, $\subdiff f: \stateSpace \setto \stateSpace$ is a set-valued mapping, and we denote its \emph{domain} and \emph{graph} by $\effDom \ \subdiff f$ and $\graph \ \subdiff f$, respectively. Here, a set-valued mapping $\genSetmap: \R^k \setto \R^l$ is said to be \emph{outer semi-continuous} at $\Bar{\genRealOne}$, if $\limsup_{\genRealOne \to \Bar{\genRealOne}} \genSetmap(\genRealOne) \subset \genSetmap(\Bar{\genRealOne})$, where the \emph{outer limit} is defined as $\limsup_{\genRealOne \to \Bar{\genRealOne}} \genSetmap(\genRealOne) = \{u \ \vert \ \exists \iter{\genRealOne}{\timeIdx} \to \Bar{\genRealOne}, \ \exists \iter{u}{\timeIdx} \to u \ \text{with} \ \iter{u}{\timeIdx} \in \genSetmap(\iter{\genRealOne}{\timeIdx})\}$. 
For convenience of the reader, we have collected more details in Appendix~\ref{App:missing_definitions}. Also, the exact definition of a \emph{Kurdyka-\L ojasiewicz} (KL) function can be found there, as it is quite intricate. However, for the following, it is actually sufficient to understand that these are functions that are \say{sharp up to reparametrization} \cite{Attouch_Bolte_Svaiter_2013}, and that many functions encountered in real-world problems are KL-functions.
Finally, the space of measures on $\genSpaceOne$ is denoted by $\measures{\genSpaceOne}$, and all probability measures that are absolutely continuous w.r.t. a reference measure $\mu \in \measures{\genSpaceOne}$ are denote by $\probMeasures{\mu} := \{ \nu \in \measures{\genSpaceOne} \ : \ \text{$\nu \ll \mu$ and $\nu[\genSpaceOne] = 1$} \}$. Here, the Kullback-Leiber divergence between two measures $\mu$ and $\nu$ is defined as $\divergence{\rm{KL}}{\nu}{\mu} = \nu[\log(f)] = \int_\genSpaceOne \log(f(x)) \ \nu(dx)$, if $\nu \ll \mu$ with density $f$, and  $+\infty$ otherwise. 

\section{Problem Setup}\label{Sec:assumptions}

Instead of considering a whole class of problems, we assume that we are given a \emph{parametric loss-function} $\loss: \stateSpace \times \parSpace \to [0, \infty]$ and a random variable $\rvPar$ taking values in the parameter space $\parSpace = \R^q$. Here, $\stateSpace = \R^d$ is the optimization space and the ultimate goal would be to solve
$$
    \min_{\realState \in \stateSpace} \loss(\realState, \realPar)
$$
for every realization $\rvPar = \realPar$. Since we include non-convex optimization problems, finding the global minimum is infeasible, and we focus on finding a critical point instead.
For this, we apply an \emph{algorithmic update} $\algo: \hypSpace \times \parSpace \times \stateSpace \times \randSpace \to \stateSpace$ iteratively to the initial state $\iter{\realState}{0} \in \stateSpace$:
\begin{equation}\label{eq:markov_process}
    \iter{\realState}{\timeIdx + 1} = \algo(\realHyp, \realPar, \iter{\realState}{\timeIdx}, \iter{\realRand}{\timeIdx + 1}), \quad  \timeIdx \in \N_0 \,.
\end{equation}
Here, the \emph{hyperparameters} $\realHyp \in \hypSpace$ allow for adjusting the algorithm, the \emph{parameters} $\realPar \in \parSpace$ specify the loss function $\loss(\cdot, \realPar)$ the algorithm is applied to, and $\iter{\realRand}{\timeIdx + 1} \in \randSpace$ models the \emph{(internal) randomness} of the algorithm. 
To find suitable hyperparameters $\realHyp \in \hypSpace$, the algorithm $\algo$ is trained on an i.i.d. data set of problem parameters $S = (\rvPar_1, ..., \rvPar_N)$ in such a way that its performance on $\ell$ is superior to that of traditional algorithms. However, in optimization \say{performance} is usually measured based on the whole sequence $\trajAsIter$, for example, a linear rate of convergence has to hold for all iterations. Thus, to deal with such measures of performance, one needs to access the trajectories generated by $\algo$. This is where the Markovian model of \citet{Sucker_Ochs_2024_Markov} comes into play: 
If $\realHyp$ and $\realPar$ are fixed along the iterations, Equation~\eqref{eq:markov_process} can be read as the \emph{functional equation} of a Markov process $\traj = \trajAsIter$, which defines a distribution on $\trajSpace$ and thus allows for analyzing these trajectories. It is based on the following two assumptions: 
\begin{assumption}\label{Ass:polish_spaces}
    The state space $(\stateSpace, \borelalgebra{\stateSpace}, \probStateSpace)$, the parameter space $(\parSpace, \borelalgebra{\parSpace}, \probParSpace)$, the hyperparameter space $(\hypSpace, \borelalgebra{\hypSpace}, \probHypSpace)$, and the randomization space $(\randSpace, \borelalgebra{\randSpace}, \probRandSpace)$ are Polish\footnote{A \emph{Polish space} is a separable topological space that admits a complete metrization.} probability spaces.
\end{assumption}

\begin{assumption}\label{Ass:algorithm_measurable}
    The (possibly extended-valued) \emph{loss-function} $\loss: \stateSpace \times \parSpace \to [0, \infty]$ and the \emph{algorithmic update} $\algo: \hypSpace \times \parSpace \times \stateSpace \times \randSpace \to \stateSpace$ are both measurable. 
\end{assumption}


Then, \citet{Sucker_Ochs_2024_Markov} construct a suitable probability space $\pspace$ that correctly models the joint distribution of $(\rvHyp, \rvPar, \rvTraj)$ on $\hypSpace \times \parSpace \times \trajSpace$, that is, the joint distribution over hyperparameters $\realHyp$, problem parameters $\realPar$, and corresponding trajectories $\traj$ generated by $\algo(\realHyp, \realPar, \cdot, \cdot)$. By leveraging a well-known result due to \citet{Catoni_2007}, they show that properties of trajectories $\traj$, \emph{encoded as sets} $\set{A} \in \borelalgebra{\parSpace} \otimes \algebraTrajSpace$, generalize in a PAC-Bayesian way \citep[Theorem~42]{Sucker_Ochs_2024_Markov}:
\begin{theorem}\label{Thm:gen_property_trajectory}
    Let $\set{A} \subset \parSpace \times \trajSpace$ be measurable, and define $\Phi_a^{-1}(p) := \frac{1 - \exp(-ap)}{1-\exp(-a)}$. Then, for $\lambda \in (0, \infty)$, it holds that: 
    \begin{align*}
        \prob_{\rvData} \Bigl\{ &\forall \rho \in \probMeasures{\prob_\rvHyp} \ : \ \rho [\prob_{(\rvPar, \rvTraj) \vert \rvHyp} \left\{ \set{A} \right\}] \le \\
        &\Phi_{\frac{\lambda}{N}}^{-1} \Bigl( \frac{1}{N} \sum_{n=1}^N \rho \left[\prob_{(\rvPar_n, \rvTraj_n) \vert \rvHyp, \rvPar_n } \left\{ \set{A} \right\} \right]  
        + \frac{\divergence{\mathrm{KL}}{\rho}{\prob_\rvHyp} + \log \left(\frac{1}{\varepsilon}\right)}{\lambda} \Bigr)  \Bigr\} \ge 1-\varepsilon \,.
    \end{align*}
\end{theorem}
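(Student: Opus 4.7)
The approach is to apply a Catoni-type PAC-Bayesian bound in which the per-sample ``loss'' is the Rao--Blackwellized indicator $\hat{R}_n(\realHyp) := \prob_{(\rvPar_n, \rvTraj_n) \vert \rvHyp, \rvPar_n}\{\set{A}\} \in [0,1]$, whose expectation over $\rvPar_n$ given $\rvHyp = \realHyp$ equals the target $R(\realHyp) := \prob_{(\rvPar, \rvTraj) \vert \rvHyp}\{\set{A}\}$. The statement is in fact precisely Theorem~42 of \citet{Sucker_Ochs_2024_Markov}, so the task is to reproduce their Catoni-style derivation once this Rao--Blackwellization is in place. Measurability of $R$ and of each $\hat{R}_n$ in $\realHyp$, as well as the joint measurability that Fubini will later require, follow from Assumption~\ref{Ass:polish_spaces}--\ref{Ass:algorithm_measurable} combined with the measurable Markov-kernel construction on Polish spaces carried out there.

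The core computation is an exponential-moment bound. For fixed $\realHyp$, the convex inequality $e^{-ax} \le 1 - x(1-e^{-a})$ on $[0,1]$, applied with $a = \lambda/N$ to $x = \hat{R}_n(\realHyp)$ and then averaged over $\rvPar_n$, yields $\prob_{\rvPar_n}\bigl[\exp(-(\lambda/N)\hat{R}_n(\realHyp))\bigr] \le 1 - R(\realHyp)(1-e^{-\lambda/N})$. By conditional independence of $\rvPar_1, \ldots, \rvPar_N$ given $\rvHyp$, taking the product over $n$ gives
\begin{equation*}
    \prob_{\rvData \vert \rvHyp}\bigl[\exp(-\lambda \hat{R}(\realHyp))\bigr]
    \;\le\; \exp\bigl(-\lambda\, \Phi_{\lambda/N}(R(\realHyp))\bigr), \qquad \hat{R} := \tfrac{1}{N}\sum_{n=1}^N \hat{R}_n,
\end{equation*}
where $\Phi_a(p) := -\tfrac{1}{a}\log(1 - p(1-e^{-a}))$ is the inverse of the $\Phi_a^{-1}$ appearing in the statement. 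Equivalently, the nonnegative random variable $Z := \exp\bigl(\lambda \Phi_{\lambda/N}(R(\rvHyp)) - \lambda \hat{R}(\rvHyp)\bigr)$ has conditional expectation at most $1$ given $\rvHyp$.

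Integrating against the prior $\prob_\rvHyp$ and interchanging the two integrations by Fubini gives $\prob_\rvData\bigl[\prob_\rvHyp[Z]\bigr] \le 1$, so Markov's inequality ensures that with $\prob_\rvData$-probability at least $1-\varepsilon$ one has $\prob_\rvHyp[Z] \le 1/\varepsilon$. On this event, the Donsker--Varadhan variational formula for the KL divergence gives, uniformly over all $\rho \in \probMeasures{\prob_\rvHyp}$,
\begin{equation*}
    \lambda \, \rho[\Phi_{\lambda/N}(R)] - \lambda \, \rho[\hat{R}]
    \;=\; \rho[\log Z]
    \;\le\; \divergence{\mathrm{KL}}{\rho}{\prob_\rvHyp} + \log \prob_\rvHyp[Z]
    \;\le\; \divergence{\mathrm{KL}}{\rho}{\prob_\rvHyp} + \log(1/\varepsilon).
\end{equation*}
Dividing by $\lambda$, invoking Jensen's inequality (since $\Phi_{\lambda/N}$ is convex and increasing) to obtain $\Phi_{\lambda/N}(\rho[R]) \le \rho[\Phi_{\lambda/N}(R)]$, and finally applying the increasing inverse $\Phi_{\lambda/N}^{-1}$ to both sides yields the claimed bound.

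The step I expect to require most care is not the algebra but the measure-theoretic bookkeeping: one needs $R$, each $\hat{R}_n$, and therefore $Z$ to be \emph{jointly} measurable in $(\realHyp, \rvData)$, so that Fubini legitimately interchanges $\prob_\rvData$ with $\prob_\rvHyp$, and one needs the conditional probabilities $\prob_{(\rvPar_n, \rvTraj_n) \vert \rvHyp, \rvPar_n}\{\set{A}\}$ to exist as bona fide regular conditional distributions rather than merely as almost-sure objects. This is precisely where Assumption~\ref{Ass:polish_spaces} is indispensable---on Polish spaces, regular conditional distributions exist---and where the disintegration underlying the Markov probability space $\pspace$ of \citet{Sucker_Ochs_2024_Markov} does the heavy lifting that makes all the objects above canonically well-defined.
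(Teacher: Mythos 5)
Your proposal is correct and follows essentially the same route as the result it reproduces: the paper does not prove this theorem itself but imports it as Theorem~42 of \citet{Sucker_Ochs_2024_Markov}, which is exactly the Catoni-type argument you give --- the convexity bound $e^{-ax}\le 1-x(1-e^{-a})$ on the Rao--Blackwellized indicators, the product over conditionally independent samples, Markov's inequality on the prior-integrated exponential moment, Donsker--Varadhan, and Jensen via convexity of $\Phi_a$. Your closing remarks on joint measurability and the existence of regular conditional distributions on Polish spaces correctly identify the one piece of infrastructure that the cited Markov-kernel construction supplies.
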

Here, $\prob_\rvHyp$ is the so-called \emph{prior} over hyperparameters. Every $\rho \in \probMeasures{\prob_\rvHyp}$ is called a \emph{posterior}, $\prob_{(\rvPar, \traj) \vert \rvHyp}$ is the conditional distribution of the parameters with corresponding trajectory, given the hyperparameters, and $\prob_\rvData$ is the distribution of the data set $S$. On an intuitive level, the probability to observe a problem instance $\loss(\cdot, \realPar)$ and a corresponding trajectory $\traj$ generated by the algorithm $\algo(\realHyp, \realPar, \cdot, \cdot)$, which satisfies the properties encoded in $\set{A}$, can be bounded based on empirical estimates. 
It is important to note that, except for a brief remark, \citet{Sucker_Ochs_2024_Markov} do not make any further use of this result.
In this paper, we observe the power of this theorem and apply it to the set of sequences that converge to a critical point of $\loss$. 
\begin{definition}
    Let $f: \stateSpace \to \pExtR$ be proper. A point $\realState \in \stateSpace$ is called \emph{critical} for $f$, if $0 \in \subdiff f(\realState)$. 
\end{definition} 
It is crucial to realize that, usually it is \emph{impossible} to observe convergence directly, as it belongs to the class of \emph{tail events}, which do not depend on any finite number of iterates. Hence, to be able to apply the generalization result from above, we need abstract properties that do not depend on the implementation of the algorithm, are easily observable during training, and are sufficient to deduce convergence, which, as we discussed in the related work, is highly non-trivial in the challenging setup of non-smooth non-convex optimization. Nevertheless, such conditions are provided by the following result due to \citet[Theorem~2.9]{Attouch_Bolte_Svaiter_2013}:
\begin{theorem}\label{Thm:convergence_to_stationary_point}
    Let $f : \stateSpace \to \pExtR$ be a proper lower semi-continuous function that is bounded from below. Further, suppose that $\trajAsIter \subset \stateSpace$ is a sequence satisfying the following property: There exist positive scalars $a$ and $b$, such that the following conditions hold:
    \begin{itemize}
        \item[(i)] \emph{Sufficient-decrease}: For each $\timeIdx \in \N_0$,
        $f(\iter{\realState}{\timeIdx + 1}) + a \norm{\iter{\realState}{ \timeIdx + 1} - \iter{\realState}{\timeIdx}}{}^2 \le f(\iter{\realState}{\timeIdx})$.
        \item[(ii)] \emph{Relative-error}: For each $\timeIdx \in \N_0$, there exists $\iter{v}{\timeIdx + 1} \in \subdiff f(\iter{\realState}{\timeIdx + 1})$ with $\norm{\iter{v}{\timeIdx + 1}}{} \le b \norm{\iter{\realState}{\timeIdx + 1} - \iter{\realState}{\timeIdx}}{}$.
        \item[(iii)] \emph{Continuity}: For any convergent subsequence $\iter{\realState}{\timeIdx_j} \overset{j \to \infty}{\to} \hat{\realState}$, we have $f(\iter{\realState}{\timeIdx_j}) \overset{j \to \infty}{\to} f(\hat{\realState})$.
    \end{itemize}
    If, additionally, the sequence $\trajAsIter$ is bounded and $f$ is a Kurdyka-\L ojasiewicz function, then $\trajAsIter$ converges to a critical point of $f$.
\end{theorem}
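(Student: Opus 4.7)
The plan is to follow the classical three-phase argument for convergence under the Kurdyka-\L{}ojasiewicz property. \textbf{Phase 1 (energy estimates).} From (i), the sequence $(f(\iter{\realState}{\timeIdx}))_{\timeIdx}$ is monotone non-increasing; as $f$ is bounded below, it converges to some $f^\star \in \R$. Telescoping (i) over $\timeIdx = 0, \ldots, N-1$ yields $a \sum_{\timeIdx=0}^{\infty} \norm{\iter{\realState}{\timeIdx+1} - \iter{\realState}{\timeIdx}}{}^2 \le f(\iter{\realState}{0}) - f^\star < \infty$, so the step-lengths are square-summable and, in particular, $\norm{\iter{\realState}{\timeIdx+1} - \iter{\realState}{\timeIdx}}{} \to 0$. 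Combined with (ii), this forces $\norm{\iter{v}{\timeIdx+1}}{} \to 0$ as well.

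\textbf{Phase 2 (limit-set analysis).} Let $\omega$ denote the set of cluster points of $(\iter{\realState}{\timeIdx})_{\timeIdx}$. Since the sequence is bounded, $\omega$ is nonempty and compact. For any $\hat{\realState} \in \omega$ approached along a subsequence $\iter{\realState}{\timeIdx_j} \to \hat{\realState}$, condition (iii) gives $f(\iter{\realState}{\timeIdx_j}) \to f(\hat{\realState})$, so $f \equiv f^\star$ on $\omega$. Using (ii) to produce $\iter{v}{\timeIdx_j} \in \subdiff f(\iter{\realState}{\timeIdx_j})$ with $\iter{v}{\timeIdx_j} \to 0$, the closed-graph property of $\subdiff f$ under convergence of both arguments and function values then yields $0 \in \subdiff f(\hat{\realState})$, so every cluster point is critical.

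\textbf{Phase 3 (single-point convergence via KL).} This is the heart of the argument. Because $f$ is KL and constant on the compact set $\omega$, I would first pass from the pointwise KL definition to a uniform version on a tube around $\omega$: there exist $\varepsilon, \eta > 0$ and a concave desingularizing function $\varphi$ with $\varphi(0)=0$ and $\varphi' > 0$ such that whenever $\inf_{y\in\omega}\norm{\realState - y}{} < \varepsilon$ and $f^\star < f(\realState) < f^\star + \eta$,
\[
\varphi'(f(\realState) - f^\star)\, \inf_{u \in \subdiff f(\realState)} \norm{u}{} \ge 1.
\]
(The degenerate case $f(\iter{\realState}{\timeIdx_0}) = f^\star$ is trivial: by (i) the sequence is then eventually stationary.) For large $\timeIdx$ both the tube condition and the function-value window are satisfied, by Phase 1 and since the cluster set $\omega$ is attracting. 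Concavity of $\varphi$ gives
\[
\varphi(f(\iter{\realState}{\timeIdx}) - f^\star) - \varphi(f(\iter{\realState}{\timeIdx+1}) - f^\star) \ge \varphi'(f(\iter{\realState}{\timeIdx}) - f^\star)\bigl(f(\iter{\realState}{\timeIdx}) - f(\iter{\realState}{\timeIdx+1})\bigr).
\]
Bounding the right-hand factor from below by (i) and $\varphi'(f(\iter{\realState}{\timeIdx}) - f^\star) \ge 1/(b\norm{\iter{\realState}{\timeIdx} - \iter{\realState}{\timeIdx-1}}{})$ via (ii) and the uniform KL inequality leads to
\[
\norm{\iter{\realState}{\timeIdx+1} - \iter{\realState}{\timeIdx}}{}^2 \le \tfrac{b}{a}\norm{\iter{\realState}{\timeIdx} - \iter{\realState}{\timeIdx-1}}{}\, \Delta_{\timeIdx},
\]
where $\Delta_{\timeIdx} := \varphi(f(\iter{\realState}{\timeIdx}) - f^\star) - \varphi(f(\iter{\realState}{\timeIdx+1}) - f^\star)$. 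A Young-type inequality $\sqrt{uv} \le \tfrac{1}{2}u + \tfrac{1}{2}v$ then converts this into the telescoping bound $\norm{\iter{\realState}{\timeIdx+1} - \iter{\realState}{\timeIdx}}{} \le \tfrac{1}{2}\norm{\iter{\realState}{\timeIdx} - \iter{\realState}{\timeIdx-1}}{} + \tfrac{b}{2a}\Delta_{\timeIdx}$, whose sum over $\timeIdx$ is finite. Hence $(\iter{\realState}{\timeIdx})$ is Cauchy in $\stateSpace$, converges to some $\realState^\star \in \omega$, and Phase 2 identifies $\realState^\star$ as a critical point.

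The principal obstacle is Phase 3: upgrading the pointwise KL property to a \emph{uniform} one on the compact set $\omega$, and confining the iterates to the $\varepsilon$-tube long enough that the inequality can be iterated. The uniformization is a standard but non-trivial compactness argument; the confinement is handled by choosing a starting index $\timeIdx_0$ large enough that both $\inf_{y\in\omega}\norm{\iter{\realState}{\timeIdx_0} - y}{}$ and the tail-sum $\sum_{\timeIdx \ge \timeIdx_0}\norm{\iter{\realState}{\timeIdx+1} - \iter{\realState}{\timeIdx}}{}$ (bounded a posteriori by the telescoping estimate) are small, closing the argument in a self-consistent way.
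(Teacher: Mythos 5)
This theorem is not proved in the paper at all: it is imported verbatim as Theorem~2.9 of \citet{Attouch_Bolte_Svaiter_2013} (modulo the remark that boundedness of the sequence replaces, and implies, the existence of a convergent subsequence). Your sketch is a correct reconstruction of the standard argument behind that result, and all three phases are sound: the telescoped sufficient-decrease estimate, the identification of the cluster set as critical via outer semi-continuity of the limiting subdifferential under convergence of function values (which is exactly where condition (iii) is needed), and the finite-length estimate obtained by combining concavity of the desingularizing function with the relative-error bound and a Young inequality. The one place where your route genuinely differs from the cited proof is Phase 3: Attouch--Bolte--Svaiter localize the argument at a \emph{single} cluster point $\tilde{\realState}$ via their Lemma~2.6 (a basin-of-attraction statement: once an iterate enters a suitable neighborhood of $\tilde{\realState}$ with function value close to $f(\tilde{\realState})$, the whole tail stays there and has finite length), so no uniformization over the limit set is required; you instead uniformize the KL inequality over the entire compact set $\omega$ in the style of \citet{Bolte_Sabach_Teboulle_2014}. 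Both closures work; the single-point localization is slightly leaner, while the uniformized version is more robust and is the one most later papers reuse. Since you correctly flag the uniformization and the confinement induction as the nontrivial steps and indicate how each is resolved, I see no gap.
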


\begin{remark}
    \begin{itemize}
        \item[(i)] The continuity condition cannot be checked in practice. Therefore, we will have to assume that $\loss(\cdot, \realPar)$ is continuous on its domain. 
        \item[(ii)] Theorem~2.9 of \citet{Attouch_Bolte_Svaiter_2013} is stated slightly different: They assume existence of a convergent subsequence instead of boundedness. Yet, boundedness implies existence and is standard \citep{Bolte_Sabach_Teboulle_2014}.
        \item[(iii)] In Appendix~\ref{Appendix:counterexamples}, we provide examples to underline the necessity of these conditions. Especially, we show that the sufficient-descent condition alone is not sufficient for deducing convergence to a critical point, which appears to be a common misconception.
    \end{itemize}
\end{remark}
Since we want to employ these results from variational calculus, we have to make the restriction to $\stateSpace = \R^d$ and $\parSpace = \R^q$. However, one could also consider a (finite-dimensional) state space $\stateSpace$ that \emph{encompasses} the space of the optimization variable, that is, $\stateSpace = \R^{d_1} \times \R^{d_2}$, and, by projecting onto $\R^{d_1}$, the results carry over immediately.\footnote{For the cost of an even heavier notation, which is why we have omitted doing it here.}
\begin{assumption}\label{Ass:nonsmooth_loss}
    We have $\stateSpace = \R^d$ and $\parSpace = \R^q$, and the function $\loss: \stateSpace \times \parSpace \to [0, \infty]$ is proper, lower semi-continuous, and continuous on $\effDom \ \loss$. Furthermore, the map $(\realState, \realPar) \mapsto \partial_1 \loss(\realState, \realPar)$ is outer semi-continuous.
\end{assumption}

\section{Theoretical Results}\label{Sec:theoretical_results}

Now, we concretize our key idea from Section~\ref{section:key_idea} for the setting of learning-to-optimize, and combine Theorem~\ref{Thm:gen_property_trajectory} with Theorem~\ref{Thm:convergence_to_stationary_point} to get a generalization result for the convergence of learned algorithms to critical points. In doing so, we bring together advanced tools from learning theory and optimization: We show that the probability to observe a parameter $\realPar$ and a corresponding trajectory $\traj$, which converges to a critical point of $\loss(\cdot, \realPar)$, generalizes. For this, we formulate the sufficient-descent condition, the relative-error condition, and the boundedness assumption as \emph{measurable} sets in $\parSpace \times \trajSpace$, such that their intersection is exactly the set of sequences satisfying the properties of Theorem~\ref{Thm:convergence_to_stationary_point}. 

\subsection{Measurability}\label{subsection:measurability}
While measurability is usually dismissed as a technicality, it is absolutely necessary for the validity of employed theorems. Thus, to be able to apply Theorem~\ref{Thm:gen_property_trajectory}, we have to show that these sets are actually measurable w.r.t. $\borelalgebra{\parSpace} \otimes \algebraTrajSpace$, which, unfortunately, is not a given.
Hence, denote the (parametric) set of critical points of $\loss$ by 
$$
    \set{A}_{\mathrm{crit}} := \{(\realPar, \realState) \in \parSpace \times \stateSpace \ : \ 0 \in \subdiff_1 \ell(\realState, \realPar) \} \,.
$$
Then, the section $\set{A}_{\mathrm{crit},\realPar} := \{\realState \in \stateSpace \ : \ (\realPar, \realState) \in \set{A}_{\mathrm{crit}}\}$ is the set of critical points of $\loss(\cdot, \realPar)$. 

\begin{lemma}\label{lemma:convergent_sequences_are_measurable}
    Suppose that Assumptions~\ref{Ass:polish_spaces}, \ref{Ass:algorithm_measurable} and \ref{Ass:nonsmooth_loss} hold. Define the (parametric) set of sequences that converge to a critical point of $\loss$ as 
    \begin{align*}
        \set{A}_{\mathrm{conv}} := \{(\realPar, \trajAsIter) \in \parSpace \times \trajSpace \ : \ 
        \exists \realState^* \in \set{A}_{\mathrm{crit},\realPar} \ s.t. \ \lim_{\timeIdx \to \infty} \norm{\iter{\realState}{\timeIdx} - \realState^*}{} = 0\} \,.
    \end{align*} 
    Then $\set{A}_{\mathrm{conv}}$ is measurable.
\end{lemma}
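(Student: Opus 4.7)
The plan is to write $\set{A}_{\mathrm{conv}}$ as the intersection of two measurable pieces: the set of $(\realPar, \traj)$ whose trajectory $\traj$ converges (which does not involve $\realPar$ at all), and the set of $(\realPar, \traj)$ such that the limit $L(\traj)$ belongs to the $\realPar$-section of the critical set $\set{A}_{\mathrm{crit}}$. Since $\stateSpace = \R^d$ is Hausdorff, the limit of a convergent sequence is uniquely determined, so this decomposition faithfully captures the existential quantifier over $\realState^*$.

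First, I would verify that $\set{A}_{\mathrm{crit}}$ is closed in $\parSpace \times \stateSpace$, exploiting the outer semi-continuity of $(\realState, \realPar) \mapsto \subdiff_1 \loss(\realState, \realPar)$ granted by Assumption~\ref{Ass:nonsmooth_loss}. Indeed, if $(\realPar_n, \realState_n) \to (\bar\realPar, \bar\realState)$ with $0 \in \subdiff_1 \loss(\realState_n, \realPar_n)$ for every $n$, then the constant sequence $u_n \equiv 0$ lies in $\subdiff_1 \loss(\realState_n, \realPar_n)$ and converges to $0$, so by outer semi-continuity $0 \in \subdiff_1 \loss(\bar\realState, \bar\realPar)$. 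Hence $\set{A}_{\mathrm{crit}}$ is closed and, in particular, Borel measurable in $\parSpace \times \stateSpace$.

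Next, I would express the set of convergent trajectories as a countable Boolean combination of cylinder sets via the Cauchy criterion:
\begin{equation*}
    \set{C} := \bigl\{\trajAsIter \in \trajSpace \ : \ \lim_{\timeIdx \to \infty} \iter{\realState}{\timeIdx} \text{ exists in } \stateSpace \bigr\} = \bigcap_{k \ge 1} \bigcup_{N \ge 1} \bigcap_{n,m \ge N} \bigl\{ \norm{\iterProj{n} - \iterProj{m}}{} < 1/k \bigr\},
\end{equation*}
each atomic set being measurable because the canonical projections $\iterProj{\timeIdx}$ are measurable; hence $\set{C} \in \algebraTrajSpace$. I would then define a Borel-measurable limit map $L: \trajSpace \to \stateSpace$ by taking, coordinatewise, $L_i(\traj) := \liminf_\timeIdx (\iterProj{\timeIdx}(\traj))_i$ whenever this lim-inf is finite and equal to the corresponding lim-sup, and $L_i(\traj) := 0$ otherwise. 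Coordinatewise lim-infs and lim-sups of measurable functions are measurable, so $L$ is measurable, and on $\set{C}$ it agrees with the genuine limit.

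Finally, the map $\Phi: \parSpace \times \trajSpace \to \parSpace \times \stateSpace$, $\Phi(\realPar, \traj) := (\realPar, L(\traj))$, is measurable, and I would conclude
\begin{equation*}
    \set{A}_{\mathrm{conv}} = (\parSpace \times \set{C}) \ \cap \ \Phi^{-1}(\set{A}_{\mathrm{crit}}),
\end{equation*}
which is measurable as an intersection of two measurable sets. The only non-routine step is the closedness of $\set{A}_{\mathrm{crit}}$, which is precisely where the outer semi-continuity hypothesis in Assumption~\ref{Ass:nonsmooth_loss} does its essential work; without it, $\set{A}_{\mathrm{crit}}$ need not even be Borel, and the clean decomposition above would collapse.
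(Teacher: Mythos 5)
Your proof is correct, but it takes a genuinely different route from the paper's. The paper proves the lemma in one shot for general Polish spaces: it writes $\set{A}_{\mathrm{conv}}$ as a countable intersection (over radii $1/k$) of countable unions (over points of a countable dense set whose $1/k$-ball meets $\set{A}_{\mathrm{crit}}$) of the events \say{the trajectory ultimately lies in that ball}, and then verifies the set equality directly via a somewhat lengthy argument in which the ball midpoints form a Cauchy sequence whose limit is shown to lie in $\set{A}_{\mathrm{crit}}$ using its closedness. You instead factor the event into \say{the trajectory converges} (expressed through the Cauchy criterion, a countable Boolean combination of cylinder sets) and \say{the limit is a critical point} (the preimage of $\set{A}_{\mathrm{crit}}$ under an explicitly constructed measurable limit map $L$, built coordinatewise from lim-infs). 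Both arguments hinge on completeness and on the closedness of $\set{A}_{\mathrm{crit}}$, which you establish exactly as the paper's auxiliary lemma does, via outer semi-continuity of $\partial_1 \loss$. Your decomposition is more modular and, in one respect, strictly more economical: it only needs $\set{A}_{\mathrm{crit}}$ to be Borel, whereas the paper's \say{ball meets $\set{A}_{\mathrm{crit}}$} construction genuinely requires closedness to push the limit of midpoints back into $\set{A}_{\mathrm{crit}}$ (so your closing remark slightly overstates what your own argument needs). What the paper's version buys in exchange is independence from the coordinate structure of $\R^d$: your coordinatewise definition of $L$ uses $\stateSpace = \R^d$, which is legitimate under Assumption~\ref{Ass:nonsmooth_loss} but would require a different (preimage-of-closed-sets) construction of the limit map in an abstract Polish space. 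One minor point to make explicit: $\Phi^{-1}(\set{A}_{\mathrm{crit}})$ is measurable with respect to the product $\sigma$-algebra because $\R^q \times \R^d$ is second countable, so its Borel $\sigma$-algebra coincides with $\borelalgebra{\parSpace} \otimes \borelalgebra{\stateSpace}$.
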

\begin{proof}
    The proof is highly non-trivial and can be found in Appendix~\ref{proof:lemma:convergent_sequences_are_measurable}. However, the idea is simple:
    $\set{A}_{\mathrm{conv}}$ can be written as \emph{countable} intersection/union of measurable sets. This is possible, because we consider Polish spaces, that is, they have a countable dense subset and they are complete, that is, limits of Cauchy sequences are inside the space. 
\end{proof}

\begin{lemma}\label{lemma:measurability_suff_desc_cond}
    Assume that Assumptions~\ref{Ass:polish_spaces} and \ref{Ass:algorithm_measurable} hold. Define the (parametric) set of sequences that satisfy the \emph{sufficient-descent condition} as
    \begin{align*}
        \set{A}_{\mathrm{desc}} := \Bigl\{ (\realPar, \trajAsIter) \in \parSpace \times \trajSpace : 
        &\trajAsIter \subset \mathrm{dom} \ \loss(\cdot, \realPar) \ \text{and} \ \exists a > 0 \ \text{s.t.} \ \forall  \timeIdx \in \N_0 \\ 
        &\loss(\iter{\realState}{\timeIdx + 1}, \realPar) +  a \norm{ \iter{\realState}{\timeIdx + 1} - \iter{\realState}{\timeIdx}}{}^2 \le \loss(\iter{\realState}{\timeIdx}, \realPar)  \Bigr\} \,.
    \end{align*}
    Then $\set{A}_{\mathrm{desc}}$ is measurable.
\end{lemma}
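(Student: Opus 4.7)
My approach is to express $\set{A}_{\mathrm{desc}}$ as a countable Boolean combination of sets that manifestly belong to $\borelalgebra{\parSpace} \otimes \algebraTrajSpace$ because they are preimages of Borel sets under measurable maps. The first reduction concerns the existential quantifier in ``$\exists a > 0$'': if some $a > 0$ makes the sufficient-descent inequality hold for every $\timeIdx \in \N_0$, then so does every smaller positive constant, in particular $1/k$ for any integer $k \ge 1/a$. Hence
$$
  \set{A}_{\mathrm{desc}} = \bigcup_{k \in \N} \set{A}_{\mathrm{desc}}^{(k)}\,,
$$
where $\set{A}_{\mathrm{desc}}^{(k)}$ is the same set with $a = 1/k$ fixed. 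Since a countable union of measurable sets is measurable, it suffices to handle $\set{A}_{\mathrm{desc}}^{(k)}$ for fixed $k$.

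For fixed $k$ I would write $\set{A}_{\mathrm{desc}}^{(k)} = \set{D} \cap \bigcap_{\timeIdx \in \N_0} \set{I}^{(k)}_{\timeIdx}$, where $\set{D}$ encodes the trajectory-in-domain condition and $\set{I}^{(k)}_{\timeIdx}$ encodes the one-step inequality at index $\timeIdx$. The key building block is the auxiliary map
$$
  \Psi_\timeIdx : \parSpace \times \trajSpace \to \parSpace \times \stateSpace \times \stateSpace, \qquad (\realPar, \traj) \mapsto \bigl(\realPar,\, \iterProj{\timeIdx}(\traj),\, \iterProj{\timeIdx+1}(\traj)\bigr)\,,
$$
which is measurable by the very definition of the product $\sigma$-algebra on $\trajSpace$, the one that makes every $\iterProj{j}$ measurable. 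Composing $\Psi_\timeIdx$ with the jointly measurable loss (Assumption~\ref{Ass:algorithm_measurable}) and the continuous squared norm, the two maps $(\realPar, \traj) \mapsto \loss(\iter{\realState}{\timeIdx+1}, \realPar) + \frac{1}{k}\norm{\iter{\realState}{\timeIdx+1} - \iter{\realState}{\timeIdx}}{}^2$ and $(\realPar, \traj) \mapsto \loss(\iter{\realState}{\timeIdx}, \realPar)$ are measurable into $[0, +\infty]$. Therefore $\set{I}^{(k)}_{\timeIdx}$ is the preimage of the Borel set $\{(s,t) \in [0,+\infty]^2 : s \le t\}$ under a measurable map, hence measurable. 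Analogously, $\set{D} = \bigcap_{\timeIdx \in \N_0} \{ (\realPar, \traj) : \loss(\iter{\realState}{\timeIdx}, \realPar) < +\infty \}$ is a countable intersection of preimages of Borel sets.

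The only care required concerns bookkeeping at $+\infty$: I would adopt the convention that the inequality is satisfied whenever $\loss(\iter{\realState}{\timeIdx}, \realPar) = +\infty$, so that $\set{D}$ is a genuinely separate constraint rather than a redundancy implied by the inequalities. Assembling the pieces, $\set{A}_{\mathrm{desc}}^{(k)}$ is a countable intersection of measurable sets, and $\set{A}_{\mathrm{desc}}$ is a countable union of such sets, hence measurable. I do not expect a real obstacle here; the content is a careful translation of the logical quantifiers in the definition into set-theoretic operations (countable union/intersection, preimage under measurable maps) that preserve membership in $\borelalgebra{\parSpace} \otimes \algebraTrajSpace$, with the reduction from ``$\exists a > 0$'' to the rational countable disjunction over $a = 1/k$ being the only mildly subtle step.
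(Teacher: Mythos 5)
Your proposal is correct and follows essentially the same route as the paper: reduce the existential quantifier over $a>0$ to a countable family (you use $a=1/k$ with the monotonicity justification, the paper uses rational $a$), then write the set as a countable intersection/union of preimages of Borel sets under compositions of the coordinate projections with the measurable loss and the continuous norm, keeping the domain condition $\bigcap_{\timeIdx}\{\loss(\iter{\realState}{\timeIdx},\realPar)<\infty\}$ as a separate measurable constraint. Your handling of the inequality as a preimage of $\{s\le t\}\subset[0,+\infty]^2$ is a minor cosmetic variant of the paper's device of forming the difference $g_a$ on $(\effDom\,\loss)^2$, and both are sound.
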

\begin{proof}
    The proof can be found in Appendix~\ref{proof:lemma:measurability_suff_desc_cond}.
\end{proof}

We proceed with the relative error condition. It involves a union over all subgradients, and thus might not be measurable. Hence, we have to restrict to subgradients given through a \emph{measurable selection}, that is, a measurable function $v: \mathrm{dom} \ \subdiff_1 \loss \to \stateSpace$, such that $v(\realState, \realPar) \in \subdiff_1 \loss(\realState, \realPar)$ for every $(\realState, \realPar) \in \mathrm{dom} \ \subdiff_1 \loss$. Under the given assumptions, its existence is guaranteed by Corollary~\ref{corollary:existence_measurable_selection}. 
\begin{lemma}\label{lemma:measurability_relative_error_condition}
    Suppose that Assumptions~\ref{Ass:polish_spaces}, \ref{Ass:algorithm_measurable} and \ref{Ass:nonsmooth_loss} hold. Define the (parametric) set of sequences that satisfy the \emph{relative-error condition} as
    \begin{align*}
        \set{A}_{\mathrm{err}} := \Bigl\{ (\realPar, \trajAsIter) \in \parSpace \times \trajSpace: 
        &(\realPar, \iter{\realState}{\timeIdx}) \in \mathrm{dom} \ \subdiff_1 \loss \ \forall \timeIdx \in \N_0 \ \text{and} \ \exists b > 0 \ \text{s.t.} \ \forall  \timeIdx \in \N_0 \\
        &\norm{v(\iter{\realState}{\timeIdx + 1}, \realPar)}{} \le b \norm{ \iter{\realState}{\timeIdx + 1} - \iter{\realState}{\timeIdx}}{} \Bigr\} \,.
    \end{align*}
    Then $\set{A}_{\mathrm{err}}$ is measurable.
\end{lemma}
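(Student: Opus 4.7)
The plan is to follow the template of Lemma~\ref{lemma:measurability_suff_desc_cond} and express $\set{A}_{\mathrm{err}}$ as a countable combination of manifestly measurable sets. I would decompose $\set{A}_{\mathrm{err}} = \set{A}_{\mathrm{dom}} \cap \set{A}_{\mathrm{ineq}}$, where $\set{A}_{\mathrm{dom}}$ collects pairs $(\realPar, \trajAsIter)$ whose every iterate lies in $\mathrm{dom}\ \subdiff_1 \loss$ (so that the measurable selection $v$ from Corollary~\ref{corollary:existence_measurable_selection} is defined along the trajectory), and $\set{A}_{\mathrm{ineq}}$ collects those sequences satisfying the relative-error inequality for some $b > 0$.

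For $\set{A}_{\mathrm{dom}}$, the canonical projections $\iterProj{\timeIdx}: \trajSpace \to \stateSpace$ are measurable by definition of the product $\sigma$-algebra, hence the maps $F_\timeIdx: \parSpace \times \trajSpace \to \parSpace \times \stateSpace,\ (\realPar, \trajAsIter) \mapsto (\realPar, \iter{\realState}{\timeIdx})$ are measurable as well. This gives
\[
    \set{A}_{\mathrm{dom}} = \bigcap_{\timeIdx \in \N_0} F_\timeIdx^{-1}(\mathrm{dom}\ \subdiff_1 \loss),
\]
which is measurable provided $\mathrm{dom}\ \subdiff_1 \loss$ is Borel. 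The latter follows from Assumption~\ref{Ass:nonsmooth_loss}: outer semi-continuity of $(\realState, \realPar) \mapsto \subdiff_1 \loss(\realState, \realPar)$ makes it a closed-valued measurable set-valued map, and its effective domain is therefore Borel (this is the content packaged in Corollary~\ref{corollary:existence_measurable_selection}).

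For $\set{A}_{\mathrm{ineq}}$, I would pull the existential over real $b > 0$ out as a countable union over positive rationals, using monotonicity in $b$:
\[
    \set{A}_{\mathrm{ineq}} = \bigcup_{b \in \mathbb{Q}_{>0}} \bigcap_{\timeIdx \in \N_0} \set{B}_{b, \timeIdx}, \qquad \set{B}_{b, \timeIdx} := \Bigl\{(\realPar, \trajAsIter) : \norm{v(\iter{\realState}{\timeIdx+1}, \realPar)}{} \le b \norm{\iter{\realState}{\timeIdx+1} - \iter{\realState}{\timeIdx}}{}\Bigr\}.
\]
After extending $v$ measurably to all of $\stateSpace \times \parSpace$ (e.g.\ by zero outside $\mathrm{dom}\ \subdiff_1 \loss$, which does not alter $\set{A}_{\mathrm{err}}$ because of the intersection with $\set{A}_{\mathrm{dom}}$), the map $g_{b,\timeIdx}(\realPar, \trajAsIter) := \norm{v(\iter{\realState}{\timeIdx+1}, \realPar)}{} - b \norm{\iter{\realState}{\timeIdx+1} - \iter{\realState}{\timeIdx}}{}$ is a composition of the measurable selection $v$, the measurable projections $\iterProj{\timeIdx}, \iterProj{\timeIdx+1}$, and the continuous Euclidean norm, so $\set{B}_{b,\timeIdx} = g_{b,\timeIdx}^{-1}((-\infty, 0])$ is measurable.

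The main obstacle is genuinely upstream of this lemma: it is the Borel-measurability of $\mathrm{dom}\ \subdiff_1 \loss$ together with the existence of a Borel-measurable selection $v$ of $\subdiff_1 \loss$. Both rest on the outer semi-continuity hypothesis in Assumption~\ref{Ass:nonsmooth_loss} combined with a measurable selection theorem of Kuratowski--Ryll-Nardzewski type, and are deferred to Corollary~\ref{corollary:existence_measurable_selection}. Once these ingredients are in hand, the remainder is a purely combinatorial exercise in countable operations on preimages of Borel sets, entirely parallel to the proof of Lemma~\ref{lemma:measurability_suff_desc_cond}.
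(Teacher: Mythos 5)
Your proposal is correct and follows essentially the same route as the paper's proof: the same decomposition into a domain condition plus a countable union over rational $b$ of countable intersections over $\timeIdx$, with measurability of $\effDom\ \subdiff_1 \loss$ and of the selection $v$ supplied by the outer semi-continuity assumption via Lemma~\ref{Lem:closed_valued_and_measurable} and Corollary~\ref{corollary:existence_measurable_selection}, and each elementary set obtained as the preimage of $(-\infty,0]$ under a measurable real-valued composition. Your explicit zero-extension of $v$ off its domain is a minor tidiness improvement over the paper's treatment but does not change the argument.
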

\begin{proof}
    The proof can be found in Appendix~\ref{proof:lemma:measurability_relative_error_condition}.
\end{proof}

\begin{lemma}\label{lemma:bounded_sequences_are_measurable}
    Assume that Assumption~\ref{Ass:polish_spaces} holds. Define the set of \emph{bounded sequences} as:
    \begin{align*}
        \Tilde{\set{A}}_{\mathrm{bound}} = \Bigl\{ \trajAsIter \in \trajSpace\ : 
        \exists c \ge 0 \ \text{s.t.} \ \norm{ \iter{\realState}{\timeIdx} }{} \le c \ \forall \timeIdx \in \N_0 \Bigr\}\,.
    \end{align*}
    Then $\set{A}_{\mathrm{bound}} := \parSpace \times \Tilde{\set{A}}_{\mathrm{bound}}$ is measurable.
\end{lemma}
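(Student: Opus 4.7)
The plan is to express $\Tilde{\set{A}}_{\mathrm{bound}}$ as a countable union of countable intersections of preimages under the canonical projections $\iterProj{n} : \trajSpace \to \stateSpace$, which are measurable by the very definition of the product $\sigma$-algebra on $\trajSpace$. The key observation is that the condition ``$\exists c \ge 0$ such that $\|\iter{\realState}{\timeIdx}\| \le c$ for all $\timeIdx$'' is equivalent to ``$\exists c \in \N$ such that $\|\iter{\realState}{\timeIdx}\| \le c$ for all $\timeIdx$'', simply by rounding up. This replaces an uncountable existential quantifier by a countable one, which is what enables a $\sigma$-algebra argument.

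More precisely, let $\overline{\openBall{c}{0}} = \{\realState \in \stateSpace \ : \ \norm{\realState}{} \le c\}$, which is closed in $\stateSpace = \R^d$ and therefore belongs to $\borelalgebra{\stateSpace}$. Then I would write
\begin{equation*}
    \Tilde{\set{A}}_{\mathrm{bound}} = \bigcup_{c \in \N} \bigcap_{\timeIdx \in \N_0} \iterProj{\timeIdx}^{-1}\bigl(\overline{\openBall{c}{0}}\bigr)\,.
\end{equation*}
Since each $\iterProj{\timeIdx}$ is $\algebraTrajSpace/\borelalgebra{\stateSpace}$-measurable by definition of $\algebraTrajSpace$, every preimage on the right-hand side lies in $\algebraTrajSpace$. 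As $\algebraTrajSpace$ is closed under countable intersections and countable unions, it follows that $\Tilde{\set{A}}_{\mathrm{bound}} \in \algebraTrajSpace$.

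Finally, to conclude measurability of $\set{A}_{\mathrm{bound}} = \parSpace \times \Tilde{\set{A}}_{\mathrm{bound}}$ in $\borelalgebra{\parSpace} \otimes \algebraTrajSpace$, I would invoke the fact that measurable rectangles are, by construction, elements of the product $\sigma$-algebra; since $\parSpace \in \borelalgebra{\parSpace}$ and $\Tilde{\set{A}}_{\mathrm{bound}} \in \algebraTrajSpace$, this is immediate. There is essentially no obstacle in this argument: unlike the proof of Lemma~\ref{lemma:convergent_sequences_are_measurable}, no use of separability or completeness of $\stateSpace$ is needed, because boundedness is a countable condition in its own right and does not involve a limiting process requiring a dense subset.
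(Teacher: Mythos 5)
Your proof is correct and follows essentially the same route as the paper: both reduce the existential quantifier over $c$ to a countable index set (you use $\N$, the paper uses $\Q \cap [0,\infty)$), express $\Tilde{\set{A}}_{\mathrm{bound}}$ as a countable union of countable intersections of preimages of closed balls under the canonical projections, and conclude via the measurable-rectangle property of the product $\sigma$-algebra. No gaps.
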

\begin{proof}
    The proof can be found in Appendix~\ref{proof:lemma:bounded_sequences_are_measurable}.
\end{proof}

\subsection{Convergence to critical points}
We are now in a position to derive our main result.
\begin{corollary}\label{Cor:inclusion_of_sets}
    Suppose that Assumptions~\ref{Ass:polish_spaces}, \ref{Ass:algorithm_measurable}, and \ref{Ass:nonsmooth_loss} hold. Furthermore, assume that $\loss(\cdot, \realPar)$ is a Kurdyka-\L ojasiewicz function for every $\realPar \in \parSpace$. Then the sets $\set{A}_{\mathrm{desc}} \cap \set{A}_{\mathrm{err}} \cap \set{A}_{\mathrm{bound}} \subset \parSpace \times \trajSpace$ and $\set{A}_{\mathrm{conv}} \subset \parSpace \times \trajSpace$ are measurable, and it holds that:
    $$
        \set{A}_{\mathrm{desc}} \cap \set{A}_{\mathrm{err}} \cap \set{A}_{\mathrm{bound}} \subset \set{A}_{\mathrm{conv}} \,.
    $$
\end{corollary}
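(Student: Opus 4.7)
The plan is to split the argument into two parts: first, establish measurability of both sides of the claimed inclusion, and then, pointwise in $\realPar$, verify that any element of the left-hand side satisfies the hypotheses of Theorem~\ref{Thm:convergence_to_stationary_point} applied to $f := \loss(\cdot, \realPar)$.

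The measurability part requires no fresh work. I would simply invoke Lemmas~\ref{lemma:measurability_suff_desc_cond}, \ref{lemma:measurability_relative_error_condition}, and \ref{lemma:bounded_sequences_are_measurable} for $\set{A}_{\mathrm{desc}}$, $\set{A}_{\mathrm{err}}$, and $\set{A}_{\mathrm{bound}}$, respectively, and take the intersection inside the $\sigma$-algebra $\borelalgebra{\parSpace} \otimes \algebraTrajSpace$; measurability of $\set{A}_{\mathrm{conv}}$ is then Lemma~\ref{lemma:convergent_sequences_are_measurable}.

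For the inclusion, I would fix an arbitrary $(\realPar, \trajAsIter) \in \set{A}_{\mathrm{desc}} \cap \set{A}_{\mathrm{err}} \cap \set{A}_{\mathrm{bound}}$ and walk through the hypotheses of Theorem~\ref{Thm:convergence_to_stationary_point} one by one. Assumption~\ref{Ass:nonsmooth_loss} makes $f$ proper and lower semi-continuous, while $f \ge 0$ gives boundedness from below for free. Conditions (i) and (ii) are literally the defining properties of $\set{A}_{\mathrm{desc}}$ and $\set{A}_{\mathrm{err}}$; for (ii) it is important to note that the required subgradient is witnessed by the measurable selection $v(\iter{\realState}{\timeIdx+1}, \realPar) \in \subdiff_1 f(\iter{\realState}{\timeIdx+1})$ provided by Corollary~\ref{corollary:existence_measurable_selection}. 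Boundedness of the iterates is precisely $\set{A}_{\mathrm{bound}}$, and the KL property is an assumption of the corollary. The conclusion of Theorem~\ref{Thm:convergence_to_stationary_point} then produces $\realState^* \in \set{A}_{\mathrm{crit},\realPar}$ with $\iter{\realState}{\timeIdx} \to \realState^*$, i.e.\ $(\realPar, \trajAsIter) \in \set{A}_{\mathrm{conv}}$.

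The only step that I expect to require real care is the continuity condition (iii), which is why Assumption~\ref{Ass:nonsmooth_loss} only asks for continuity on $\effDom \ \loss$ rather than globally. My plan for this step is a three-line chain: membership in $\set{A}_{\mathrm{desc}}$ forces $\iter{\realState}{\timeIdx} \in \effDom f$ for every $\timeIdx$ and makes $f(\iter{\realState}{\timeIdx})$ monotonically decreasing; for any subsequence $\iter{\realState}{\timeIdx_j} \to \hat{\realState}$, lower semi-continuity then yields $f(\hat{\realState}) \le \liminf_j f(\iter{\realState}{\timeIdx_j}) \le f(\iter{\realState}{0}) < \infty$, placing $\hat{\realState}$ in $\effDom f$; finally, continuity of $f$ on its effective domain upgrades the liminf to a genuine limit $f(\iter{\realState}{\timeIdx_j}) \to f(\hat{\realState})$, closing (iii). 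This is the main (albeit minor) technical hurdle; the rest of the corollary is essentially bookkeeping on top of Theorems~\ref{Thm:gen_property_trajectory} and \ref{Thm:convergence_to_stationary_point} and the measurability lemmas.
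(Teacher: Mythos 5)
Your proposal is correct and follows essentially the same route as the paper: invoke the measurability lemmas for each of the four sets, then fix an element of the intersection and verify the hypotheses of Theorem~\ref{Thm:convergence_to_stationary_point} for $f = \loss(\cdot,\realPar)$ one by one. Your treatment of the continuity condition is in fact slightly more careful than the paper's one-line justification, since you explicitly use lower semi-continuity and the monotone decrease to place the subsequential limit $\hat{\realState}$ inside $\effDom f$ before appealing to continuity of $\loss$ on its domain.
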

\begin{proof}
    Let $(\realPar, \trajAsIter) \in \set{A}_{\mathrm{desc}} \cap \set{A}_{\mathrm{err}} \cap \set{A}_{\mathrm{bound}}$. Thus, $\trajAsIter$ satisfies both the sufficient-descent and the relative-error condition for $\loss(\cdot, \realPar)$, and $\trajAsIter$ stays bounded. Further, $\trajAsIter$ also satisfies the continuity condition, since we have $\trajAsIter \subset \mathrm{dom} \ \loss(\cdot, \realPar)$ and $\loss$ is continuous on its domain. Hence, Theorem~\ref{Thm:convergence_to_stationary_point} implies that $\trajAsIter$ converges to a critical point of $\loss(\cdot, \realPar)$, that is, there exists $\realState^* \in \set{A}_{\mathrm{crit}, \realPar}$, such that $\lim_{\timeIdx \to \infty} \norm{\iter{\realState}{\timeIdx} - \realState^*}{} = 0$. Therefore, $(\realPar, \trajAsIter) \in \set{A}_{\mathrm{conv}}$. 
\end{proof}

\noindent
In particular, if $\mu$ is a (probability) measure on $\parSpace \times \trajSpace$, for example, $\mu = \prob_{(\rvPar, \rvTraj) \vert \rvHyp= \realHyp}$ for a given $\realHyp \in \hypSpace$, by the monotonicity of measures it holds that:
$$
    \mu \{ \set{A}_{\mathrm{desc}} \cap \set{A}_{\mathrm{err}} \cap \set{A}_{\mathrm{bound}} \} \le \mu \{ \set{A}_{\mathrm{conv}} \} \,.
$$
This idea yields our main theorem:

\begin{theorem}\label{Thm:convergence_C1_functions}
    Suppose that Assumptions~\ref{Ass:polish_spaces}, \ref{Ass:algorithm_measurable}, and \ref{Ass:nonsmooth_loss} hold. Further, assume that  $\loss(\cdot, \realPar)$ is a Kurdyka–\L ojasiewicz function for every $\realPar \in \parSpace$. Abbreviate
    $\set{A} := \set{A}_{\mathrm{desc}} \cap \set{A}_{\mathrm{err}} \cap \set{A}_{\mathrm{bound}}$.
    Then, for $\lambda \in (0, \infty)$, it holds that: 
    \begin{align*}
        \prob_{\rvData} \Bigl\{ &\forall \rho \in \probMeasures{\prob_\rvHyp} \ : \ \rho [\prob_{(\rvPar, \rvTraj) \vert \rvHyp} \left\{ \set{A}_{\mathrm{conv}} \right\}] \ge 1 - \\
        &\Phi_{\frac{\lambda}{N}}^{-1} \Bigl( \frac{1}{N} \sum_{n=1}^N \rho \left[\prob_{(\rvPar_n, \rvTraj_n) \vert \rvHyp, \rvPar_n } \left\{ \set{A}^c \right\} \right]  
        + \frac{\divergence{\mathrm{KL}}{\rho}{\prob_\rvHyp} + \log \left(\frac{1}{\varepsilon}\right)}{\lambda} \Bigr)  \Bigr\} \ge 1-\varepsilon \,.
    \end{align*}
\end{theorem}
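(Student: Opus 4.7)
The plan is to derive Theorem~\ref{Thm:convergence_C1_functions} by applying the PAC-Bayesian generalization result (Theorem~\ref{Thm:gen_property_trajectory}) to the complement $\set{A}^c$, and then transferring the resulting upper bound into a lower bound on $\rho[\prob_{(\rvPar, \rvTraj) \vert \rvHyp}\{\set{A}_{\mathrm{conv}}\}]$ through the inclusion provided by Corollary~\ref{Cor:inclusion_of_sets}. First I would observe that $\set{A} = \set{A}_{\mathrm{desc}} \cap \set{A}_{\mathrm{err}} \cap \set{A}_{\mathrm{bound}}$ lies in $\borelalgebra{\parSpace} \otimes \algebraTrajSpace$ as a finite intersection of sets shown to be measurable in Lemmas~\ref{lemma:measurability_suff_desc_cond}, \ref{lemma:measurability_relative_error_condition}, and \ref{lemma:bounded_sequences_are_measurable}; hence so is $\set{A}^c$, which is all that is needed to feed $\set{A}^c$ into Theorem~\ref{Thm:gen_property_trajectory}.

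Doing so yields, on an event of $\prob_\rvData$-probability at least $1-\varepsilon$, the uniform-in-$\rho$ bound
$$
\rho\bigl[\prob_{(\rvPar, \rvTraj) \vert \rvHyp}\{\set{A}^c\}\bigr] \le \Phi_{\frac{\lambda}{N}}^{-1}\!\left(\frac{1}{N}\sum_{n=1}^N \rho\bigl[\prob_{(\rvPar_n, \rvTraj_n) \vert \rvHyp, \rvPar_n}\{\set{A}^c\}\bigr] + \frac{\divergence{\mathrm{KL}}{\rho}{\prob_\rvHyp} + \log(1/\varepsilon)}{\lambda}\right)
$$
for every posterior $\rho \in \probMeasures{\prob_\rvHyp}$. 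Next, Corollary~\ref{Cor:inclusion_of_sets} gives $\set{A} \subset \set{A}_{\mathrm{conv}}$, which upon complementation becomes $\set{A}_{\mathrm{conv}}^c \subset \set{A}^c$. Monotonicity of the conditional probability $\prob_{(\rvPar, \rvTraj) \vert \rvHyp}$ then yields $\prob_{(\rvPar, \rvTraj) \vert \rvHyp}\{\set{A}_{\mathrm{conv}}^c\} \le \prob_{(\rvPar, \rvTraj) \vert \rvHyp}\{\set{A}^c\}$ pointwise in $\realHyp$, and integrating against $\rho$ preserves this inequality. Finally, rewriting $\rho[\prob_{(\rvPar, \rvTraj) \vert \rvHyp}\{\set{A}_{\mathrm{conv}}\}] = 1 - \rho[\prob_{(\rvPar, \rvTraj) \vert \rvHyp}\{\set{A}_{\mathrm{conv}}^c\}]$ and chaining the two inequalities produces exactly the lower bound claimed in the statement.

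The main obstacle is not located in this final assembly but is already absorbed into the upstream results: the highly nontrivial measurability of $\set{A}_{\mathrm{conv}}$ (Lemma~\ref{lemma:convergent_sequences_are_measurable}) and of $\set{A}_{\mathrm{desc}}$, $\set{A}_{\mathrm{err}}$, $\set{A}_{\mathrm{bound}}$, together with the deterministic KL-based convergence result (Theorem~\ref{Thm:convergence_to_stationary_point}) that underlies Corollary~\ref{Cor:inclusion_of_sets}, and the Markov-chain PAC-Bayesian bound of Theorem~\ref{Thm:gen_property_trajectory}. Given these, the only technical care needed is to keep track of the direction of inequality through the chain of inclusion, complementation, monotonicity, and the final complement $1 - (\cdot)$, so that the PAC-Bayesian upper bound on $\rho[\prob\{\set{A}^c\}]$ emerges as the stated lower bound on $\rho[\prob\{\set{A}_{\mathrm{conv}}\}]$.
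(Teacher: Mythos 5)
Your proposal is correct and follows essentially the same route as the paper's own proof: apply Theorem~\ref{Thm:gen_property_trajectory} to the measurable set $\set{A}^c$, use the inclusion $\set{A} \subset \set{A}_{\mathrm{conv}}$ from Corollary~\ref{Cor:inclusion_of_sets} via complementation and monotonicity, and rearrange to turn the PAC-Bayesian upper bound on $\rho[\prob_{(\rvPar, \rvTraj) \vert \rvHyp}\{\set{A}^c\}]$ into the stated lower bound on $\rho[\prob_{(\rvPar, \rvTraj) \vert \rvHyp}\{\set{A}_{\mathrm{conv}}\}]$. The only cosmetic difference is that you pass through $\set{A}_{\mathrm{conv}}^c \subset \set{A}^c$ whereas the paper writes the equivalent inequality $\prob_{(\rvPar, \rvTraj) \vert \rvHyp}\{\set{A}^c\} \ge 1 - \prob_{(\rvPar, \rvTraj) \vert \rvHyp}\{\set{A}_{\mathrm{conv}}\}$ directly.
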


\begin{proof}
    By taking the complementary events in Corollary~\ref{Cor:inclusion_of_sets}, we have $\prob_\rvHyp$-a.s.:
    $$
     \prob_{(\rvPar, \rvTraj) \vert \rvHyp} \{ \set{A}^c \} \ge 1 - \prob_{(\rvPar, \rvTraj) \vert \rvHyp} \left\{ \set{A}_{\mathrm{conv}} \right\}\,.
    $$
    By Theorem~\ref{Thm:gen_property_trajectory}, for any measurable set $\set{B} \subset \parSpace \times \trajSpace$ and $\lambda \in (0, \infty)$, we have: 
    \begin{align*}
        \prob_{\rvData} \Bigl\{ &\forall \rho \in \probMeasures{\prob_\rvHyp} \ : \ \rho [\prob_{(\rvPar, \rvTraj) \vert \rvHyp } \left\{ \set{B} \right\}] \le \\
        &\Phi_{\frac{\lambda}{N}}^{-1} \Bigl( \frac{1}{N} \sum_{n=1}^N \rho \left[\prob_{(\rvPar_n, \rvTraj_n) \vert \rvHyp, \rvPar_n } \left\{ \set{B} \right\} \right]  
        + \frac{\divergence{\mathrm{KL}}{\rho}{\prob_\rvHyp} + \log \left(\frac{1}{\varepsilon}\right)}{\lambda} \Bigr)  \Bigr\} \ge 1-\varepsilon \,.
    \end{align*}
    Hence, using $\set{B} := \set{A}^c$, inserting the inequality above, and rearranging the terms yields the result.
\end{proof}

\begin{remark}
    \begin{itemize}
        \item[(i)] The lower bound actually applies to $\prob_{(\rvPar, \rvTraj) \vert \rvHyp}\{\set{A}\}$. Since the difference $\prob_{(\rvPar, \rvTraj) \vert \rvHyp}\{\set{A}_{\mathrm{conv}} \setminus \set{A}\}$ is unknown, we do not know the tightness of this bound for $\prob_{(\rvPar, \rvTraj) \vert \rvHyp}\{\set{A}_{\mathrm{conv}}\}$.
        \item[(ii)] We want to stress the following: We do not assume that the conditions of Theorem~\ref{Thm:convergence_to_stationary_point}, for example, the sufficient-descent condition, do hold \emph{per se}. As described in Section~\ref{section:key_idea} about the underlying idea, this theorem is rather about the fact that, based on our observations during training, we can deduce \emph{how often} these conditions will hold on unseen problem instances.
        \item[(iii)] For large $N$, if $\lambda$ is chosen correctly, we can approximate $\Phi_{\frac{\lambda}{N}}^{-1}(p) \approx p$. Assuming this holds, and abbreviating the empirical approximation as $\hat{\prob}_{(\rvPar, \rvTraj) \vert \rvHyp} := \frac{1}{N} \sum_{n=1}^N \prob_{(\rvPar_n, \rvTraj_n) \vert \rvHyp, \rvPar_n }$, the inequality reads:
        \begin{align*}
            \rho [\prob_{(\rvPar, \rvTraj) \vert \rvHyp} \left\{ \set{A}_{\mathrm{conv}} \right\}] &\ge 
            \rho \left[\hat{\prob}_{(\rvPar, \rvTraj) \vert \rvHyp} \left\{ \set{A} \right\} \right]  \\
            &+ \frac{\divergence{\mathrm{KL}}{\rho}{\prob_\rvHyp} + \log \left(\frac{1}{\varepsilon}\right)}{\lambda}\,.
        \end{align*}
        This is intuitive: For larger $N$, we have more confidence in our estimate $\hat{\prob}_{(\rvPar, \rvTraj) \vert \rvHyp} \left\{ \set{A} \right\}$, so we can choose a larger $\lambda$ which dampens the last term and tightens the lower bound for $\prob_{(\rvPar, \rvTraj) \vert \rvHyp} \left\{ \set{A}_{\mathrm{conv}} \right\}$.
    \end{itemize}
\end{remark}


\section{Experiments}
In this section, we conduct two experiments: The strongly convex and smooth problem of minimizing quadratic functions with varying strong convexity, varying smoothness, and varying right-hand side, and the non-smooth non-convex problem of training a neural network on different data sets. The code to reproduce the results can be found at \url{https://github.com/MichiSucker/COLA_2024}.

\begin{figure*}[t!]
    \vspace{.1in}
    \centering
    \includegraphics[width=\textwidth]{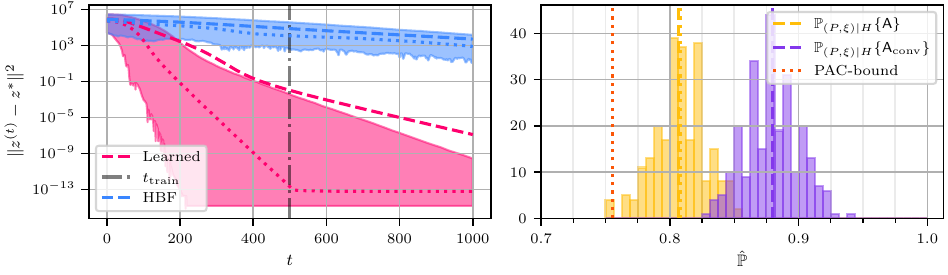}
    \vspace{.1in}
    \caption{Quadratic problems: The left figure shows the distance to the minimizer over the iterations, where \emph{heavy-ball with friction} (HBF) is shown in blue and the learned algorithm in pink. The mean and median are shown as dashed and dotted lines, respectively, while the shaded region represents 95\% of the test data. One can see that the learned algorithm converges way faster than HBF. The right plot shows the estimates (dashed lines) for $\prob_{(\rvPar, \rvTraj) \vert \rvHyp} \{\set{A}\}$ (orange), $\prob_{(\rvPar, \rvTraj) \vert \rvHyp} \{\set{A_{\mathrm{conv}}}\}$ (purple), and the PAC-bound (dark orange). One can see that the predicted chain of inequalities $1-\Phi^{-1}(...) \le \prob_{(\rvPar, \rvTraj) \vert \rvHyp} \{\set{A}\} \le \prob_{(\rvPar, \rvTraj) \vert \rvHyp} \{\set{A_{\mathrm{conv}}}\}$ does hold true.}
    \label{fig:results_experiment_quadratics}
\end{figure*}
\subsection{Quadratic Problems}
First, we train the algorithm $\algo$ to solve quadratic problems. Thus, each optimization problem $\loss(\cdot, \realPar)$ is of the form 
$$
    \min_{\realState \in \R^d} \frac{1}{2} \Vert A\realState - b \Vert^2, \quad A \in \R^{d \times d}, \ b \in \R^d \,,
$$
such that the parameters are given by $\realPar = (A, b) \in \R^{d^2 + d} =: \parSpace$, and the optimization variable is $\realState \in \R^d$, $d=200$. The strong-convexity and smoothness constants of $\loss$ are sampled randomly in the intervals $[m_-, m_+], [L_-, L_+] \subset (0, +\infty)$, and we define the matrix $A_j$, $j = 1, ..., N$, as \emph{diagonal matrix} with entries $a_{ii}^j = \sqrt{m_j} + i (\sqrt{L_j} - \sqrt{m_j}) / d$, $i = 1,...,d$. In principle, this is a severe restriction. However, we do not use this knowledge explicitly in the design of our algorithm $\algo$, that is, if the algorithm \say{finds} this structure during learning by itself, it can leverage on it.  
Like this, the given class of functions is $L_+$-smooth and $m_-$-strongly convex, such that we use \emph{heavy-ball with friction} (HBF) \citep{Polyak_1964} as worst-case optimal baseline. Its update is given by
$\iter{\realState}{\timeIdx + 1} = \iter{\realState}{\timeIdx} - \beta_1 \nabla f (\iter{\realState}{\timeIdx}) + \beta_2 \left( \iter{\realState}{\timeIdx} - \iter{\realState}{\timeIdx-1} \right)$, where the optimal worst-case convergence rate is attained for $\beta_1 = \Bigl( \frac{2}{\sqrt{L_+} + \sqrt{\mu_-}} \Bigr)^2, \beta_2 = \Bigl( \frac{\sqrt{L_+} - \sqrt{\mu_-}}{\sqrt{L_+} + \sqrt{\mu_-}} \Bigr)^2$ \citep{Nesterov_2018}. Similarly, the learned algorithm $\mathcal{A}$ performs an update of the form $\iter{\realState}{\timeIdx + 1} = \iter{\realState}{\timeIdx} + \iter{\beta}{\timeIdx} \iter{d}{\timeIdx}$, where $\iter{\beta}{\timeIdx}$ and $\iter{d}{\timeIdx}$ are predicted by separate blocks of a neural network. Here, we stress that the update is not constrained in any way. For more details on the architecture we refer to Appendix~\ref{Appendix:exp_quad}. 
Since the functions are smooth and strongly convex, we only have to check the sufficient-descent condition and the relative-error condition. Obviously, in practice it is impossible to check them for all $\timeIdx \in \N_0$. Thus, we restrict to $\timeIdx_{\mathrm{train}} = 500$ iterations. Then, given a measurable selection $v(\realState, \realPar) \in \partial_1 \loss(\realState, \realPar)$, the relative-error condition is trivially satisfied with $b := \max_{\timeIdx \le \timeIdx_{\mathrm{train}}} \{ \norm{v(\iter{\realState}{\timeIdx}, \realPar)}{}\} / \min_{\timeIdx \le \timeIdx_{\mathrm{train}}} \norm{\iter{\realState}{\timeIdx} - \iter{\realState}{\timeIdx-1}}{}$, such that we only have to check the sufficient-descent condition during training. Finally, we consider $\traj$ to be converged, if the loss is smaller than $10^{-16}$. For more details we refer again to Appendix~\ref{Appendix:exp_quad}. 
The results are shown in Figure~\ref{fig:results_experiment_quadratics}: The left plot shows the distance to the minimizer $\realState^*$ over the iterations, where HBF is shown in blue and the learned algorithm in pink, and we can see that the learned algorithm is clearly superior. The right plot shows the estimated probabilities $\prob_{(\rvPar, \rvTraj) \vert \rvHyp} \{\set{A}\}$ (yellow dashed line), $\prob_{(\rvPar, \rvTraj) \vert \rvHyp} \{\set{A_{\mathrm{conv}}}\}$ (purple dashed line), and the PAC-bound (orange dotted line) on 250 test sets of size $N = 250$. We can see that the PAC-bound is quite tight for $\prob_{(\rvPar, \rvTraj) \vert \rvHyp} \{\set{A}\}$, while there is a substantial gap $\prob_{(\rvPar, \rvTraj) \vert \rvHyp} \{\set{A}_{\mathrm{conv}} \setminus \set{A}\}$. Nevertheless, it \emph{guarantees} convergence of the learned algorithm in about 75\% of the test problems.

\begin{figure*}[t!]
    \vspace{.1in}
    \centering
    \includegraphics[width=\textwidth]{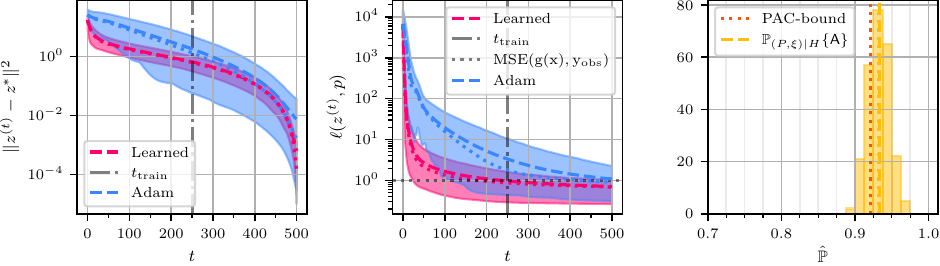}
    \vspace{.1in}
    \caption{Training a neural network: The left figure shows the distance to the estimated critical point and the figure in the middle shows the loss. Adam is shown in blue and the learned algorithm in pink. The mean and median are shown as dashed and dotted lines, respectively, while the shaded region represents 95\% of the test data. We see that the learned algorithm minimizes the loss faster than Adam, and seems to converge to a critical point. The right plot shows the estimate for $\prob_{(\rvPar, \rvTraj) \vert \rvHyp} \{\set{A}\}$ (orange dashed line) and the PAC-bound (dark orange). }
    \label{fig:results_experiment_nn}
\end{figure*}
\subsection{Training a Neural Network}
As second experiment, we train the algorithm $\algo$ to train a neural network $\mathtt{N}$ on a regression problem. Thus, the algorithm $\algo$ predicts parameters $\beta \in \R^d$, such that $\mathtt{N}(\beta, \cdot)$ estimates a function $g: \R \to \R$ from noisy observations $y_{i,j} = g_i(x_j) + \varepsilon_{i,j}$, $i=1,..., N$, $j = 1, ..., K$ ($K = 50$), with $\varepsilon_{i,j} \overset{iid}{\sim} \mathcal{N}(0,1)$. Here, we use the mean square error as loss for the neural network, and for $\mathtt{N}$ we use a fully-connected two layer neural network with ReLU-activation functions. Then, by using the data sets as parameters, that is, $\parSpace = \R^{K \times 2}$ and $\realPar_i = \{(x_{i,j}, y_{i,j})\}_{j=1}^K$, the loss functions for the algorithm are given by $\loss(\beta, \realPar_i) := \frac{1}{K} \sum_{j=1}^{K} (\mathtt{N}(\beta, x_{i,j}) - y_{i,j})^2$, which are non-smooth and non-convex in $\beta$. Here, the input $x$ is transformed into the vector $(x, x^2, ..., x^5)$, such that the parameters $\beta \in \R^d$ are given by the weights $A_1 \in \R^{50 \times 5}, A_2 \in \R^{1 \times 50}$ and biases $b_1 \in \R^{50}, b_2 \in \R$ of the two fully-connected layers. Thus, the optimization space is of dimension $p = 351$. For the functions $g_i$ we use polynomials of degree $d=5$, where we sample the coefficients $(c_{i, 0}, ..., c_{i, 5})$ uniformly in $[-5, 5]$. Similarly, we sample the points $\{x_{i,j}\}_{j=1}^K$ uniformly in $[-2, 2]$. As baseline we use Adam \citep{Kingma_Ba_2015} as it is implemented in PyTorch \citep{PyTorch_2019}, and we tune its step-size with a simple grid search over 100 values in $[10^{-4}, 10^{-2}]$, such that its performance is best for the given $\timeIdx_{\mathrm{train}} = 250$ iterations. This yields the value $\kappa = 0.008$. Note that we use Adam in the \say{full-batch setting} here, while, originally, it was introduced for the stochastic case.
On the other hand, the learned algorithm performs the update $\iter{\realState}{\timeIdx + 1} = \iter{\realState}{\timeIdx} + \iter{d}{\timeIdx} / \sqrt{\timeIdx}$, where $\iter{d}{\timeIdx}$ is predicted by a neural network. Again, we stress that $\iter{d}{\timeIdx}$ is not constrained in any way. For more details on the architecture, we refer to Appendix~\ref{Appendix:exp_nn}. As we cannot access the critical points directly, we approximate them by running gradient descent for $5\cdot 10^4$ iterations with a step-size of $1\cdot 10^{-6}$, starting for each problem and algorithm from the last iterate ($\timeIdx=500$). Similarly, we cannot estimate the convergence probability in this case, only the probability for the event $\set{A}$.
The results of this experiment are shown in Figure~\ref{fig:results_experiment_nn}: The left plot shows the distance to the critical point and the plot in the middle shows the loss. Here, Adam is shown in blue, while the learned algorithm is shown in pink. Finally, the right plot shows the estimate for $\prob_{(\rvPar, \rvTraj) \vert \rvHyp} \{\set{A}\}$ and the predicted PAC-bound. We can see that the learned algorithm does indeed seem to converge to a critical point and it minimizes the loss faster than Adam. Further, the PAC-bound is quite tight, and it \emph{guarantees} that the learned algorithm will converge in about $92\%$ of the problems.

\section{Conclusion}

We presented a novel method for deducing convergence of generic learned algorithms that exhibit a Markovian structure with high probability. To showcase the idea, we derived a new convergence result for learned optimization algorithms on (possibly) non-smooth non-convex loss-functions based on generalization. 
This was based on the fundamental insight that, contrary to traditional optimization, in learning-to-optimize we can actually \emph{observe} the algorithm during training.
While the approach is theoretically sound, practically it has at least four drawbacks, on which we shortly want to comment: First, and foremost, one simply cannot observe the \emph{whole} trajectory in practice. Thus, one can only obtain an approximation to this result, that is, whether the used conditions do hold up to a certain number of iterations. Nevertheless, by using sufficiently many iterations, one can guarantee that the algorithm gets sufficiently close to a critical point. Second, instead of verifying the conditions used here, one could alternatively try to observe the final result directly, for example by looking at the norm of the gradient. However, when checking the proposed conditions one is guaranteed to get arbitrary close to a critical point, while, in the other case, one could end up with a small gradient norm that is arbitrary far away from a critical point. Especially, this applies in the non-smooth setting, where the subdifferential does not necessarily tell anything about the distance to a critical point\footnote{For example, consider $f(\realState) = \abs{\realState}$.}, or to applications where one simply cannot access critical points during training. Third, for now, training the algorithm in such a way that it actually does satisfy the proposed properties on a majority of problems is quite difficult and time-consuming.
Lastly, due to the sufficient-descent condition, Theorem~\ref{Thm:convergence_to_stationary_point} is not well-suited for stochastic optimization. Nevertheless, Theorem~\ref{Thm:gen_property_trajectory} and the proposed approach can directly be transferred to the stochastic setting, which we leave for future work.

\newpage
\appendix

\section{Missing Definitions}\label{App:missing_definitions}
The following definitions can be found in the book of \citet{Rockafellar_Wets_2009}. A function $f: \R^d \to \R \cup \{\pm \infty\}$ is called \emph{proper}, if $f(x) < +\infty$ for at least one point $x \in \R^d$ and $f(x) > -\infty$ for all $x \in \R^d$. In this case, the \emph{effective domain} of $f$ is the set 
$$
    \effDom \ f := \{x \in \R^d \ : \ f(x) < +\infty \} \,. 
$$
Similarly, for a set-valued mapping $\genSetmap: \genSpaceOne \setto \genSpaceTwo$ the \emph{graph} is defined as 
$$
    \graph \ \genSetmap := \{(\genRealOne, \genRealTwo) \in \genSpaceOne \times \genSpaceTwo \ : \ \genRealTwo \in \genSetmap(\genRealOne) \}\,, 
$$ 
while its \emph{domain} is defined as 
$$
    \effDom \ \genSetmap := \{\genRealOne \in \genSpaceOne \ : \ \genSetmap(\genRealOne) \neq \emptyset\} \,.
$$ 
The \emph{outer limit} of a set-valued map $\genSetmap: \R^k \setto \R^l$ is defined as:
$$
    \limsup_{\genRealOne \to \Bar{\genRealOne}} \ \genSetmap(\genRealOne) := \left\{ \genRealTwo \ \vert \ \exists \iter{\genRealOne}{\timeIdx} \to \Bar{\genRealOne}, \ \exists \iter{\genRealTwo}{\timeIdx} \to \genRealTwo \ \text{with} \ \iter{\genRealTwo}{\timeIdx} \in \genSetmap(\iter{\genRealOne}{\timeIdx}) \right\} \,.
$$
Based on this, $\genSetmap$ is said to be \emph{outer semi-continuous} at $\Bar{\genRealOne}$, if
$$
    \limsup_{\genRealOne \to \Bar{\genRealOne}} \ \genSetmap(\genRealOne) \subset \genSetmap(\Bar{\genRealOne}) \,.
$$
\begin{definition}
    Consider a function $f: \R^d \to \R \cup \{\pm \infty\}$ and a point $\Bar{x} \in \effDom \ f$. For a vector $v \in \R^d$, one says that
    \begin{itemize}
        \item[(i)] $v$ is a \emph{regular subgradient} of $f$ at $\Bar{x}$, if
        $$
            f(x) \ge f(\Bar{x}) + \sprod{v}{x - \Bar{x}} + o\left( \norm{x - \Bar{x}}{}\right) \,.
        $$
        The set of regular subgradients of $f$ at $\Bar{x}$, denoted by $\hat{\subdiff} f(\Bar{x})$, is called the \emph{regular subdifferential} of $f$ at $\Bar{x}$.
        \item[(ii)] $v$ is a (general) \emph{subgradient} of $f$ at $\Bar{x}$, if there are sequences $\iter{x}{\timeIdx} \to \Bar{x}$ and $\iter{v}{\timeIdx} \to v$ with $f(\iter{x}{\timeIdx}) \to f(\Bar{x})$ and $\iter{v}{\timeIdx} \in \hat{\subdiff} f(\iter{x}{\timeIdx})$.
        The set of subgradients of $f$ at $\Bar{x}$, denoted by $\subdiff f(\Bar{x})$, is called the (limiting) \emph{subdifferential} of $f$ at $\Bar{x}$.
    \end{itemize}
\end{definition}

Finally, the following definition can be found in \citet[Definition 2.4, p.7]{Attouch_Bolte_Svaiter_2013}.
\begin{definition}
    \begin{itemize}
        \item[a)] The function $f: \R^d \to \pExtR$ is said to have the \emph{Kurdyka-\L ojasiewicz property} at $\Bar{x} \in \mathrm{dom}\ \partial f$, if there exist $\eta \in (0, +\infty]$, a neighborhood $U$ of $\Bar{x}$, and a continuous concave function $\varphi: [0, \eta) \to [0, \infty)$, such that
        \begin{itemize}
            \item[(i)] $\varphi(0) = 0$,
            \item[(ii)] $\varphi$ is $C^1$ on $(0, \eta)$,
            \item[(iii)] for all $s \in (0, \eta)$, $\varphi'(s) > 0$,
            \item[(iv)] for all $x$ in $U \cap \{f(\Bar{x}) < f < f(\Bar{x}) + \eta\}$, the Kurdyka-\L ojasiewicz inequality holds
            $$
                \varphi'\left( f(x) - f(\Bar{x})\right) \cdot \dist(0, \subdiff f(x)) \ge 1 \,.
            $$
        \end{itemize}
        \item[b)] Proper lower semi-continuous functions which satisfy the Kurdyka-\L ojasiewicz property at each point of $\effDom \ \subdiff f$ are called \emph{Kurdyka-\L ojasiewicz functions}.
    \end{itemize}
\end{definition}

\section{Counterexamples}\label{Appendix:counterexamples}

\begin{example}[Violation of boundedness assumption]
    Starting from $\iter{\realState}{1} := 1$, define the sequence for $2 \le \timeIdx \in \N$ by $\iter{\realState}{\timeIdx} := \iter{\realState}{\timeIdx-1} + \frac{1}{\timeIdx}$, and consider the positive and convex function $f(\realState) := \exp(-\realState)$. We show that the sequence $(\iter{\realState}{\timeIdx})_{\timeIdx \in \N}$ does satisfy the sufficient-descent condition for $f$: By definition, we have the recursive formula
	$f(\iter{\realState}{\timeIdx+1}) = f(\iter{\realState}{\timeIdx}) \exp\left(-\frac{1}{\timeIdx+1}\right)$, which allows for rewriting the sufficient-descent condition as:
	$$
	\frac{a}{(\timeIdx+1)^2} \le \left( 1 - \exp\left(-\frac{1}{\timeIdx+1}\right)\right) f(\iter{\realState}{\timeIdx}) \,.
	$$
	Then, we have to find $a > 0$ satisfying this inequality for all $\timeIdx \in \N$. For $\timeIdx = 1$, the right-hand side is greater than $\frac{1}{9}$, such that we can choose any $a \in (0, \frac{4}{9}]$ (rough estimate). Thus, take $a \in (0, \frac{4}{9}]$, such that $\tilde{a} := a \cdot 2e\in (0, \frac{4}{9}]$, and proceed by induction (note that $\tilde{a}$ satisfies the stated condition for $\timeIdx=1$). Assuming that the inequality holds true for up to time $\timeIdx$, we get by the induction hypothesis:
	$$
	\frac{\tilde{a}}{(\timeIdx+2)^2} \le \left( \frac{\timeIdx+1}{\timeIdx+2} \right)^2 \left( 1 - \exp\left(-\frac{1}{\timeIdx+1}\right)\right) f(\iter{\realState}{\timeIdx})
	$$
	By inserting a trivial $1$ three-times, the right-hand side can be written as:
	$$
	\left( \frac{\timeIdx+1}{\timeIdx+2} \right)^2 \cdot \exp \left( \frac{1}{\timeIdx+2} \right) \cdot \frac{\exp\left(\frac{1}{\timeIdx+1}\right) - 1}{\exp\left(\frac{1}{\timeIdx+2}\right) - 1} \cdot 
	\left( 1 - \exp\left(-\frac{1}{\timeIdx+2}\right)\right) f(\iter{\realState}{\timeIdx+1}) \,.
	$$ 
	Here, the first term is bounded by 1, the second by $e$, and the third by $2$. Hence, dividing both sides by $2e$, we get:
	$$
	\frac{a}{(\timeIdx+2)^2} \le \left( 1 - \exp\left(-\frac{1}{\timeIdx+2}\right)\right) f(\iter{\realState}{\timeIdx+1}) \,,
	$$
	such that $(\iter{\realState}{\timeIdx})_{\timeIdx \in \N}$ satisfies the sufficient-descent condition for $f$. Nevertheless, we have $\iter{\realState}{\timeIdx} = \iter{\realState}{\timeIdx-1} + \frac{1}{\timeIdx} = ... = \sum_{k=1}^\timeIdx \frac{1}{k}$, such that $\abs{\iter{\realState}{\timeIdx}} \overset{\timeIdx \to \infty}{\to} \infty$, that is, the sequence is unbounded and does not converge. 
\end{example}

\begin{example}[Violation of relative-error condition]
    Consider the smooth and strongly convex function $f(\realState_1, \realState_2) := \frac{1}{2} \realState_1^2 + \frac{1}{2} \realState_2^2$, and define the sequence $((\iter{\realState}{\timeIdx}, \iter{\realState_2}{\timeIdx}))_{\timeIdx \in \N_0} \subset \R^2$ through 
	$$
		(\iter{\realState_1}{\timeIdx+1}, \iter{\realState_2}{\timeIdx+1}) := (\iter{\realState_1}{\timeIdx} - 0.1 \iter{\realState_1}{\timeIdx}, \iter{\realState_2}{\timeIdx}) \,.
	$$
	Then we have $\norm{(\iter{\realState_1}{\timeIdx+1}, \iter{\realState_2}{\timeIdx+1}) - (\iter{\realState_1}{\timeIdx}, \iter{\realState_2}{\timeIdx})}{}^2 = 
	\left(0.1 \iter{\realState_1}{\timeIdx}\right)^2$, and it holds:
	\begin{align*}
		f(\iter{\realState_1}{\timeIdx+1}, \iter{\realState_2}{\timeIdx+1}) 
		&= \frac{1}{2}\left(\iter{\realState_1}{\timeIdx} - 0.1 \iter{\realState_1}{\timeIdx}\right)^2 + \frac{1}{2} \left(\iter{\realState_2}{\timeIdx}\right)^2 \\
		&= f(\iter{\realState_1}{\timeIdx}, \iter{\realState_2}{\timeIdx}) - 0.1 \left(\iter{\realState_1}{\timeIdx}\right)^2 + \frac{1}{2} \left( 0.1 \iter{\realState_1}{\timeIdx} \right)^2 \\
		&= f(\iter{\realState_1}{\timeIdx}, \iter{\realState_2}{\timeIdx}) - 9.5 \norm{(\iter{\realState_1}{\timeIdx+1}, \iter{\realState_2}{\timeIdx+1}) - (\iter{\realState_1}{\timeIdx}, \iter{\realState_2}{\timeIdx})}{}^2 \,,
	\end{align*}
	such that $(\iter{\realState_1}{\timeIdx+1}, \iter{\realState_2}{\timeIdx+1})$ satisfies the sufficient-descent condition. However, for $\iter{\realState_2}{0} \neq 0$, the sequence converges to $(0, \iter{\realState_2}{0})$, which is not a critical point of $f$. 
\end{example}

\section{Proof of Lemma~\ref{lemma:convergent_sequences_are_measurable}}\label{proof:lemma:convergent_sequences_are_measurable}
\begin{lemma} \label{Lem:stationary_points_closed}
    Suppose Assumption~\ref{Ass:nonsmooth_loss} holds. Then, $\set{A}_{\mathrm{crit}}$ is closed.
\end{lemma}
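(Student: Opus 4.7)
The plan is to prove closedness via the sequential characterization, exploiting the outer semi-continuity assumption on $(\realState, \realPar) \mapsto \partial_1 \loss(\realState, \realPar)$ that is baked into Assumption~\ref{Ass:nonsmooth_loss}. The set $\set{A}_{\mathrm{crit}}$ is exactly the $0$-level slice of the graph of this set-valued map, so showing it is closed should reduce to a single application of the definition of outer semi-continuity.

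Concretely, I would take an arbitrary convergent sequence $(\realPar_k, \realState_k) \to (\bar{\realPar}, \bar{\realState})$ with $(\realPar_k, \realState_k) \in \set{A}_{\mathrm{crit}}$ for all $k$, and show $(\bar{\realPar}, \bar{\realState}) \in \set{A}_{\mathrm{crit}}$. By definition, $0 \in \partial_1 \loss(\realState_k, \realPar_k)$ for every $k$. Now consider the constant sequence $u_k := 0$, which trivially satisfies $u_k \to 0$ and $u_k \in \partial_1 \loss(\realState_k, \realPar_k)$. By the definition of the outer limit given earlier in the excerpt, this exhibits $0$ as an element of $\limsup_{(\realState, \realPar) \to (\bar{\realState}, \bar{\realPar})} \partial_1 \loss(\realState, \realPar)$. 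Outer semi-continuity at $(\bar{\realState}, \bar{\realPar})$ then yields $0 \in \partial_1 \loss(\bar{\realState}, \bar{\realPar})$, which is exactly $(\bar{\realPar}, \bar{\realState}) \in \set{A}_{\mathrm{crit}}$.

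I do not expect any genuine obstacle here; the statement is essentially the well-known fact that an outer semi-continuous mapping has closed graph, specialized to the preimage of $\{0\}$. The only thing to be careful about is to state the outer semi-continuity in the joint variables $(\realState, \realPar)$ as assumed (rather than e.g.\ only in $\realState$ for fixed $\realPar$), so that the convergence $(\realPar_k, \realState_k) \to (\bar{\realPar}, \bar{\realState})$ together with $u_k \to 0$ really places $0$ into the outer limit evaluated at the joint limit point. No Polish-space or measurability machinery is required for this particular lemma; it is a purely variational-analytic statement that will later feed into the measurability arguments for $\set{A}_{\mathrm{conv}}$.
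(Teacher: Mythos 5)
Your proposal is correct and follows essentially the same route as the paper: take a convergent sequence in $\set{A}_{\mathrm{crit}}$, feed the constant sequence $u_k := 0$ into the definition of the outer limit, and invoke outer semi-continuity of $(\realState, \realPar) \mapsto \partial_1 \loss(\realState, \realPar)$ in the joint variables to conclude $0 \in \partial_1 \loss(\bar{\realState}, \bar{\realPar})$. No gaps.
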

\begin{proof}
    Take $(\iter{\realPar}{\timeIdx}, \iter{\realState}{\timeIdx})_{\timeIdx \in \N} \subset \set{A}_{\mathrm{crit}}$ with $(\iter{\realPar}{\timeIdx}, \iter{\realState}{\timeIdx}) \to (\Bar{\realPar}, \Bar{\realState}) \in \parSpace \times \stateSpace$. We need to show that $(\Bar{\realPar}, \Bar{\realState}) \in \set{A}_{\mathrm{crit}}$. Since $(\realState, \realPar) \mapsto \subdiff_1 \loss(\realState, \realPar)$ is outer semi-continuous, we have:
    $$
        \limsup_{(\realState, \realPar) \to (\Bar{\realState}, \Bar{\realPar})} \ \subdiff_1 \loss(\realState, \realPar) \subset \subdiff_1 \loss(\Bar{\realState}, \Bar{\realPar}) \,.
    $$
    By definition of the outer limit, this is the same as:
    $$
        \left\{ u \in \stateSpace \ \vert \ \exists (\iter{\realState}{\timeIdx}, \iter{\realPar}{\timeIdx}) \to (\Bar{\realState}, \Bar{\realPar}), \ \exists \iter{u}{\timeIdx} \to u \ \text{with} \ \iter{u}{\timeIdx} \in \subdiff_1 \loss(\iter{\realState}{\timeIdx}, \iter{\realPar}{\timeIdx}) \right\} \subset \subdiff_1 \loss(\Bar{\realState}, \Bar{\realPar}) \,.
    $$
    In particular, we have that $(\iter{\realState}{\timeIdx}, \iter{\realPar}{\timeIdx})_{\timeIdx \in \N} \to (\Bar{\realState}, \Bar{\realPar})$, and it holds $0 \in \subdiff_1 \loss(\iter{\realState}{\timeIdx}, \iter{\realPar}{\timeIdx})$ for all $\timeIdx \in \N$. Thus, setting $\iter{u}{\timeIdx} := 0$ for all $\timeIdx \in \N$ and $u := 0$, we conclude that $0 \in \subdiff_1 \loss(\Bar{\realState}, \Bar{\realPar})$. Hence, $(\Bar{\realPar}, \Bar{\realState}) \in \set{A}_{\mathrm{crit}}$, and $\set{A}_{\mathrm{crit}}$ is closed.
\end{proof}

Now, we can prove Lemma~\ref{lemma:convergent_sequences_are_measurable}:
\begin{proof}
    To show measurability of $\set{A}_{\mathrm{conv}}$, we adopt the notation of the \emph{limes inferior} for sets from probability theory: If $d$ is a metric on $\parSpace \times \stateSpace$ and $\varepsilon > 0$, define the set
    $$
        \left\{ \openBall{\varepsilon}{\realPar, \realState} \ \text{ult.}  \right\} 
        := \left\{ (\realPar', \iter{\realState}{\timeIdx}) \in \openBall{\varepsilon}{\realPar, \realState} \ \text{ult.} \right\} 
        := \bigcup_{\secondTimeIdx \in \N_0} \bigcap_{\timeIdx \ge \secondTimeIdx} 
        \left \{ (\realPar', \iter{\realState}{\timeIdx}) \in \openBall{\varepsilon}{\realPar, \realState} \right \} \,.
    $$
    Here, $\left \{ (\realPar', \iter{\realState}{\timeIdx}) \in \openBall{\varepsilon}{\realPar, \realState} \right \}$ is a short-hand notation for 
    $\{(\realPar', (\iter{\realState}{\timeIdx})_{\timeIdx \in \N_0}) \in \parSpace \times \trajSpace  : (\realPar', \iter{\realState}{\timeIdx}) \in \openBall{\varepsilon}{\realPar, \realState} \}$. Thus, $\left\{ \openBall{\varepsilon}{\realPar, \realState} \ \text{ult.}  \right\}$ is the (parametric) set of all sequences in $\stateSpace$ that \emph{ultimately} lie in the ball with radius $\varepsilon$ around $(\realPar, \realState)$.
    Note that $\left\{ \openBall{\varepsilon}{\realPar, \realState} \ \text{ult.}  \right\}$ is measurable w.r.t. to the product $\sigma$-algebra on $\parSpace \times \trajSpace$, since it is the countable union/intersection of measurable sets, where $\{(\realPar', \iter{\realState}{\timeIdx}) \in \openBall{\varepsilon}{\realPar, \realState}\}$ is measurable, since it can be written as $\{ d((\realPar', \iter{\realState}{\timeIdx}), (\realPar, \realState)) < \varepsilon\} = \left(g \circ (id, \iterProj{t})\right)^{-1} [0, \varepsilon)$. Here, $id$ is the identity on $\parSpace$, and $g(\realPar', \realState') := d((\realPar', \realState'), (\realPar, \realState))$ is continuous. \\
    Since the proof does not get more complicated by considering general Polish space $\parSpace$, $\stateSpace$ instead of $\R^q$ and $\R^d$, we prove the result in this more general setting. For this, denote the the complete metric on $\parSpace$ by $d_\parSpace$, and the one on $\stateSpace$ by $d_\stateSpace$. Then we have that $d_{\parSpace \times \stateSpace} := d_\parSpace + d_\stateSpace$ is a metric on $\parSpace \times \stateSpace$ that metrizes the product-topology, that is, it yields the same $\sigma$-algebra. Similarly, denote the countable dense subset in $\parSpace$ by $\mathcal{P}$, and the one in $\stateSpace$ by $\mathcal{Z}$. Then we have that $\mathcal{D} := \mathcal{P} \times \mathcal{Z}$ is a countable and dense subset of $\parSpace \times \stateSpace$. \\
    If $\set{A}_{\mathrm{crit}}$ is empty, we get that $\set{A}_{\mathrm{conv}} = \emptyset$, which is measurable. Hence, w.l.o.g. assume that $\set{A}_{\mathrm{crit}} \neq \emptyset$. 
    We claim that:
    $$
        \set{A}_{\mathrm{conv}} = \set{C} := \bigcap_{k \in \N} \ \bigcup_{\substack{(\realPar, \realState) \in \mathcal{D} \\ \set{A}_{\mathrm{crit}} \cap \openBall{1/k}{\realPar, \realState} \neq \emptyset}} \ \left\{ \openBall{1/k}{\realPar, \realState} \ \text{ult.} \right\}  \,.
    $$
    If this equality holds, $\set{A}_{\mathrm{conv}}$ is measurable as a countable intersection/union of measurable sets. Thus, it remains to show the equality $\set{A}_{\mathrm{conv}} = \set{C}$, which we do by showing both inclusions.
    Therefore, first, take $(\realPar, \trajAsIter) \in \set{A}_{\rm{conv}}$. Then there exists $\realState^* \in \stateSpace$, such that $(\realPar, \realState^*) \in \set{A}_{\mathrm{crit}}$ and $\lim_{\timeIdx \to \infty} d_\stateSpace(\iter{\realState}{\timeIdx}, \realState^*) = 0$. Hence, for any $k \in \N$, there exists $\timeIdx_k \in \N$, such that $\iter{\realState}{\timeIdx} \in \openBall{1/3k}{\realState^*}$ for all $\timeIdx \ge \timeIdx_k$. Now, take $(\realPar_k, \realState_k) \in \mathcal{D}$, such that $\realPar_k \in \openBall{1/3k}{\realPar}$ and $\realState_k \in \openBall{1/3k}{\realState^*}$, which exists, since $\mathcal{D}$ is dense. Then, for all $\timeIdx \ge \timeIdx_k$ we have:
    $$
        d_{\parSpace \times \stateSpace}((\realPar, \iter{\realState}{\timeIdx}), (\realPar_k, \realState_k)) 
        = d_\parSpace(\realPar, \realPar_k) + d_\stateSpace(\iter{\realState}{\timeIdx}, \realState_k) 
        \le d_\parSpace(\realPar, \realPar_k) + d_\stateSpace(\iter{\realState}{\timeIdx}, \realState^*) + d_\stateSpace(\realState^*, \realState_k) 
        < \frac{1}{k} \,,
    $$
    that is, $(\realPar, \trajAsIter) \in \{ \openBall{1/k}{\realPar_k, \realState_k} \ \text{ult.}\}$. Further, we have:
    $$
        d_{\parSpace \times \stateSpace}((\realPar, \realState^*), (\realPar_k, \realState_k)) < \frac{2}{3k} < \frac{1}{k} \,.
    $$
    Hence, $(\realPar_k, \realState_k) \in \mathcal{D}$ with $\set{A}_{\mathrm{crit}} \cap \openBall{1/k}{\realPar_k, \realState_k} \neq \emptyset$. Since such a tuple $(\realPar_k, \realState_k) \in \mathcal{D}$ can be found for any $k \in \N$, we get:
    $$
    (\realPar, \trajAsIter) \in \bigcup_{\substack{(\realPar', \realState') \in \mathcal{D} \\ \set{A}_{\mathrm{crit}} \cap \openBall{1/k}{\realPar', \realState'} \neq \emptyset}} \{ \openBall{1/k}{\realPar', \realState'} \ \text{ult.}\}, \quad \forall k \in \N \,.
    $$
    Then, however, this implies $(\realPar, \trajAsIter) \in \set{C}$, which shows the inclusion $\set{A}_{\mathrm{conv}} \subset \set{C}$. 
    Now, conversely, let $(\realPar, \trajAsIter) \in \set{C}$. Then, for every $k \in \N$ there exists $(\realPar_k, \realState_k) \in \mathcal{D}$ with $\set{A}_{\mathrm{crit}} \cap \openBall{1/k}{\realPar_k, \realState_k} \neq \emptyset$, and a $\timeIdx_k \in \N$, such that
    $$
        (\realPar, \iter{\realState}{\timeIdx}) \in \openBall{1/k}{\realPar_k, \realState_k}, \quad \forall \timeIdx \ge \timeIdx_k \,.
    $$
    The resulting sequence of midpoints $(\realPar_k, \realState_k)_{k \in \N}$ is Cauchy in $\parSpace \times \stateSpace$, because: 
    For $k, l \in \N$, we have that $(\realPar, \iter{\realState}{\timeIdx}) \in \openBall{1/k}{\realPar_k, \realState_k}$ for all $\timeIdx \ge \timeIdx_k$, and $(\realPar, \iter{\realState}{\timeIdx}) \in \openBall{1/l}{\realPar_l, \realState_l}$ for all $\timeIdx \ge \timeIdx_l$. Thus, for $\timeIdx \ge T := \max \{\timeIdx_k, \timeIdx_l\}$, we get $(\realPar, \iter{\realState}{\timeIdx}) \in \openBall{1/k}{\realPar_k, \realState_k} \cap \openBall{1/l}{\realPar_l, \realState_l}$, which allows for the following bound:
    \begin{align*}
        d_{\parSpace \times \stateSpace}((\realPar_k, \realState_k), (\realPar_l, \realState_l)) 
        &\le 
        d_{\parSpace \times \stateSpace}((\realPar_k, \realState_k), (\realPar, \iter{\realState}{\timeIdx_k})) 
        + d_{\parSpace \times \stateSpace}((\realPar, \iter{\realState}{\timeIdx_k}), (\realPar, \iter{\realState}{T})) \\
        &+ d_{\parSpace \times \stateSpace}((\realPar, \iter{\realState}{T}), (\realPar, \iter{\realState}{\timeIdx_l})) 
        + d_{\parSpace \times \stateSpace}((\realPar, \iter{\realState}{\timeIdx_l}), (\realPar_l, \realState_l)) \\
        &\le \frac{1}{k} + \frac{2}{k} + \frac{2}{l} + \frac{1}{l} 
        \le \frac{3}{k} + \frac{3}{l} \overset{k, l \to \infty}{\to} 0 \,.
    \end{align*}
    Hence, by completeness of $\parSpace \times \stateSpace$, the sequence $(\realPar_k, \realState_k)_{k \in \N}$ has a limit $(\realPar^*, \realState^*)$ in $\parSpace \times \stateSpace$. 
    First, we show that $\realPar^* = \realPar$: Since $(\realPar, \iter{\realState}{\timeIdx_k}) \in \openBall{1/k}{\realPar_k, \realState_k}$ for all $k \in \N$, we have by continuity of the metric:
    $$
        d_\parSpace(\realPar, \realPar^*) = \lim_{k \to \infty} d_\parSpace(\realPar, \realPar_k) 
        \le \lim_{k \to \infty} d_{\parSpace \times \stateSpace}((\realPar, \iter{\realState}{\timeIdx_k}),(\realPar_k, \realState_k)) \le \lim_{k \to \infty} \frac{1}{k} = 0 \,.
    $$
    Thus, actually, $(\realPar_k, \realState_k) \to (\realPar, \realState^*)$. 
    Second, we show that $(\realPar, \realState^*) \in \set{A}_{\mathrm{crit}}$, that is, $\realState^* \in \set{A}_{\mathrm{crit}, \realPar}$: Assume the contrary, that is, $(\realPar, \realState^*) \in \set{A}_{\mathrm{crit}}^c$. By Lemma~\ref{Lem:stationary_points_closed}, the set $\set{A}_{\mathrm{crit}}$ is closed. Thus, its complement $\set{A}_{\mathrm{crit}}^c$ is open, and there exists $\varepsilon > 0$ with $\openBall{\varepsilon}{\realPar, \realState^*} \subset \set{A}_{\mathrm{crit}}^c$, that is, $\openBall{\varepsilon}{\realPar, \realState^*} \cap \set{A}_{\mathrm{crit}} = \emptyset$. Since $(\realPar_k, \realState_k) \to (\realPar, \realState^*)$, there exists $N \in \N$, such that $d_{\parSpace \times \stateSpace}((\realPar_k, \realState_k), (\realPar, \realState^*)) < \frac{\varepsilon}{3}$ for all $k \ge N$. Then, however, taking $k \ge N$ with $\frac{1}{k} < \frac{\varepsilon}{3}$, we conclude that 
    $$
        \openBall{1/k}{\realPar_k, \realState_k} \cap \set{A}_{\mathrm{crit}} = \emptyset \,.
    $$
    By definition of the sequence $(\realPar_k, \realState_k)_{k \in \N}$, this is a contradiction. Hence, we have $(\realPar, \realState^*) \in \set{A}_{\mathrm{crit}}$, and it remains to show that also the sequence $\trajAsIter$ converges to $\realState^*$. For this, assume the contrary again. Then there exists an $\varepsilon > 0$ with the property that for all $T \in \N$, one can find a $\Tilde{\timeIdx} \ge T$, such that $d_\stateSpace(\iter{\realState}{\Tilde{\timeIdx}}, \realState^*) \ge \varepsilon$. Now, choose $k \in \N$ large enough, such that $d_\stateSpace(\realState_k, \realState^*) \le \frac{\varepsilon}{3}$ and $\frac{1}{k} < \frac{\varepsilon}{3}$. Then, since $(\realPar, \iter{\realState}{\timeIdx}) \in \openBall{1/k}{\realPar_k, \realState_k}$ for all $\timeIdx \ge \timeIdx_k$, we have:
    $$
        d_\stateSpace(\iter{\realState}{\timeIdx}, \realState^*) 
        \le 
        d_\stateSpace(\iter{\realState}{\timeIdx}, \realState_k) + d_\stateSpace(\realState_k, \realState^*) 
        \le \frac{2 \varepsilon}{3} 
        < \varepsilon, \quad \forall \timeIdx \ge \timeIdx_k \,.
    $$
    Again, this is a contradiction and such an $\varepsilon > 0$ cannot exists. Thus, $\trajAsIter$ converges to $\realState^* \in \set{A}_{\mathrm{crit}, \realPar}$, and we have $(\realPar, \trajAsIter) \in \set{A}_{\mathrm{conv}}$, which concludes the proof.
\end{proof}

\section{Proof of Lemma~\ref{lemma:measurability_suff_desc_cond}}\label{proof:lemma:measurability_suff_desc_cond}
\begin{proof}
    Since $\mathbb{Q}$ is dense in $\R$, we can restrict to $a \in (0, \infty) \cap \mathbb{Q} =: \Q_+$. Then $\set{A}_{\mathrm{desc}}$ can be written as 
    $$
    \left(\bigcup_{a \in \Q_+} \ \bigcap_{\timeIdx \in \N_0} \set{A}_{a,\timeIdx} \right) \cap \left(\bigcap_{\timeIdx \in \N_0} \left\{\loss(\iter{\realState}{\timeIdx}, \realPar) < \infty \right\} \right) \,,
    $$
    where $\set{A}_{a, \timeIdx}$ is given by:
    $$
         \left\{ \loss(\iter{\realState}{\timeIdx + 1}, \realPar) +  a \norm{ \iter{\realState}{\timeIdx + 1} - \iter{\realState}{\timeIdx}}{}^2 \le \loss(\iter{\realState}{\timeIdx}, \realPar) \right\} \,.
    $$
    Since $\sigma$-algebras are stable under countable unions/intersection, it suffices to show that the sets $\{\loss(\iter{\realState}{\timeIdx},\realPar) < \infty\}$ and $\set{A}_{a,\timeIdx}$ are measurable. Here, the set $\{\loss(\iter{\realState}{\timeIdx},\realPar) < \infty\}$ can be written as:
    $$
        \left\{\loss(\iter{\realState}{\timeIdx}, \realPar) < \infty \right\} = \left(\loss \circ \Phi \circ (id, \iterProj{\timeIdx} ) \right)^{-1} [0, \infty) \,,
    $$
    where $\Phi: \parSpace \times \stateSpace \to \stateSpace \times \parSpace$ just interchanges the coordinates (which is measurable), and $id$ is the identity on $\parSpace$.
    Since $[0, \infty)$ is a measurable set and $\loss$ is assumed to be measurable, we have that $\{\loss(\iter{\realState}{\timeIdx}, \realPar) < \infty\}$ is measurable for each $\timeIdx \in \N_0$. To show that $\set{A}_{a,\timeIdx}$ is measurable, we define the function $g_a: (\effDom \ \loss)^2 \to \R$ through $((\realState_1, \realPar_1), (\realState_2, \realPar_2)) \mapsto \loss(\realState_2, \realPar_2) - \loss(\realState_1, \realPar_1) + a \norm{\realState_2 - \realState_1}{}^2$. Then, $g_a$ is measurable and $\set{A}_{a,\timeIdx}$ can be written as:
    \begin{align*}
        \set{A}_{a,\timeIdx} &= \left\{ g_a(\iter{\realState}{\timeIdx}, \realPar, \iter{\realState}{\timeIdx + 1}, \realPar) \le 0 \right\} \\
        &= \left\{ \left(g_a \circ (\iterProj{\timeIdx}, id, \iterProj{\timeIdx+1}, id) \circ \iota \right)\left( \realPar, \trajAsIter \right) \le 0 \right\} \\
        &= \left(g_a \circ (\iterProj{\timeIdx}, id, \iterProj{\timeIdx + 1}, id) \circ \iota) \right)^{-1} (-\infty, 0] \,,
    \end{align*}
    where $\iota: \parSpace \times \trajSpace \to (\trajSpace \times \parSpace)^2$ is the diagonal inclusion $(\realPar, \realState) \mapsto ((\realState, \realPar), (\realState, \realPar))$, which is measurable w.r.t. to the product-$\sigma$-algebra on $(\trajSpace \times \parSpace)^2$, since $\iota^{-1} \left( \set{B}_1 \times \set{B}_2 \right) = \set{B}_1 \cap \set{B}_2$.
    Thus, the set $\set{A}_{a,\timeIdx}$ is measurable, which concludes the proof.
\end{proof}

\section{Existence of Measurable Selection and Proof of Lemma~\ref{lemma:measurability_relative_error_condition}} \label{proof:lemma:measurability_relative_error_condition}
\begin{definition}
    A set-valued mapping $\genSetmap: \genSpaceOne \rightrightarrows \R^d$ is \emph{measurable}, if for every open set $\set{O} \subset \R^d$ the set $\genSetmap^{-1}(\set{O}) \subset \genSpaceOne$ is measurable. In particular, $\effDom \ \genSetmap$ has to be measurable.
\end{definition}

\begin{lemma}\label{Lem:closed_valued_and_measurable}
    Suppose Assumption~\ref{Ass:nonsmooth_loss} holds. Then $(\realState, \realPar) \mapsto \subdiff_1 \loss(\realState, \realPar)$ is closed-valued and measurable.
\end{lemma}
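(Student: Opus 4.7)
The plan is to prove the two properties separately: closed-valuedness of $\subdiff_1\loss(\realState,\realPar)$ at each $(\realState,\realPar)\in\mathrm{dom}\,\subdiff_1\loss$, and then measurability of the set-valued map $(\realState,\realPar)\mapsto\subdiff_1\loss(\realState,\realPar)$. Both are classical facts of variational analysis, and the second is essentially a consequence of the first together with the outer semi-continuity postulated in Assumption~\ref{Ass:nonsmooth_loss}.

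For \emph{closed-valuedness}, I would argue directly from the construction of the limiting subdifferential recalled in Appendix~\ref{App:missing_definitions}. Fix $(\Bar{\realState},\Bar{\realPar})\in\mathrm{dom}\,\subdiff_1\loss$ and let $v^k\to v$ with $v^k\in\subdiff_1\loss(\Bar{\realState},\Bar{\realPar})$. By definition, for each $k$ there exist sequences $\iter{\realState}{n,k}\to\Bar{\realState}$ with $\loss(\iter{\realState}{n,k},\Bar{\realPar})\to\loss(\Bar{\realState},\Bar{\realPar})$, and $\iter{v}{n,k}\to v^k$ with $\iter{v}{n,k}\in\hat{\subdiff}_1\loss(\iter{\realState}{n,k},\Bar{\realPar})$. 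A standard diagonal extraction yields indices $n_k$ such that $\iter{\realState}{n_k,k}\to\Bar{\realState}$, $\loss(\iter{\realState}{n_k,k},\Bar{\realPar})\to\loss(\Bar{\realState},\Bar{\realPar})$, and $\iter{v}{n_k,k}\to v$, which by the very definition of the limiting subdifferential gives $v\in\subdiff_1\loss(\Bar{\realState},\Bar{\realPar})$.

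For \emph{measurability}, I would exploit the fact that outer semi-continuity of $\subdiff_1\loss$ everywhere makes its graph closed in $(\stateSpace\times\parSpace)\times\stateSpace$. Let $\set{O}\subset\stateSpace=\R^d$ be open. I would then write $\set{O}=\bigcup_{n\in\N}\set{K}_n$ as a countable union of compact sets (for instance $\set{K}_n:=\{v\in\set{O}\,:\,\mathrm{dist}(v,\set{O}^c)\ge 1/n,\ \norm{v}{}\le n\}$). Since $(\subdiff_1\loss)^{-1}(\set{O})=\bigcup_{n\in\N}(\subdiff_1\loss)^{-1}(\set{K}_n)$, it suffices to show that $(\subdiff_1\loss)^{-1}(\set{K})$ is closed for every compact $\set{K}$. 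For this, take $(\iter{\realState}{j},\iter{\realPar}{j})\to(\Bar{\realState},\Bar{\realPar})$ with $\iter{v}{j}\in\subdiff_1\loss(\iter{\realState}{j},\iter{\realPar}{j})\cap\set{K}$; compactness of $\set{K}$ yields a subsequence $\iter{v}{j_l}\to v\in\set{K}$, and the definition of outer semi-continuity then forces $v\in\subdiff_1\loss(\Bar{\realState},\Bar{\realPar})$, so $(\Bar{\realState},\Bar{\realPar})\in(\subdiff_1\loss)^{-1}(\set{K})$. Along the way one automatically gets $\effDom\,\subdiff_1\loss=(\subdiff_1\loss)^{-1}(\stateSpace)$ measurable.

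I do not foresee a genuinely hard obstacle: the argument is essentially bookkeeping of two interleaved limiting processes for closed-valuedness, plus a routine appeal to $\sigma$-compactness of $\R^d$ for measurability. The one item to remember is that the ``preimage'' in the definition preceding the lemma is the \emph{lower} inverse $\genSetmap^{-1}(\set{O})=\{x\,:\,\genSetmap(x)\cap\set{O}\ne\emptyset\}$, which is precisely what makes the closed-graph plus compact-image pair the right tool rather than an upper-inverse-type argument.
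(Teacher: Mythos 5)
Your proof is correct and follows the same decomposition as the paper: closed-valuedness of each $\subdiff_1\loss(\realState,\realPar)$ plus outer semi-continuity yields measurability of the set-valued map. The only difference is that the paper simply cites \citet{Rockafellar_Wets_2009} for both ingredients (Theorem~8.6 for closedness of the limiting subdifferential, Exercise~14.9 for ``osc and closed-valued implies measurable''), whereas you reprove them from scratch --- your diagonal argument and your reduction of open sets to a countable union of compacta via the lower inverse are exactly the standard proofs of those cited results.
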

\begin{proof}
    Since $\subdiff_1 \loss(\realState, \realPar)$ is the subdifferential of $\loss(\cdot, \realPar)$ at $\realState$, by \citet[Theorem 8.6, p.302]{Rockafellar_Wets_2009} we have that the set $\subdiff_1 \loss (\realState, \realPar)$ is closed for every $\realPar \in \parSpace$ and every $\realState \in \effDom \ \loss(\cdot, \realPar)$. Hence, we have that $\subdiff_1 \loss(\realState, \realPar)$ is closed for every $(\realState, \realPar) \in \effDom \ \loss$. Further, for $(\realState, \realPar) \not \in \effDom \ \loss$, we have $\subdiff_1 \loss(\realState, \realPar) = \emptyset$, which is closed, too. Therefore, $(\realState, \realPar) \mapsto \subdiff_1 \loss(\realState, \realPar)$ is closed-valued. Finally, since $(\realState, \realPar) \mapsto \subdiff_1 \loss(\realState, \realPar)$ is also outer semi-continuous, \citet[Exercise 14.9, p.649]{Rockafellar_Wets_2009} implies that $\subdiff_1 \loss$ is measurable w.r.t. $\mathcal{B}(\stateSpace \times \parSpace)$.
\end{proof}

\begin{corollary}\label{corollary:existence_measurable_selection}
    Suppose Assumption~\ref{Ass:nonsmooth_loss} holds. Then there exists a measurable selection for $\subdiff_1 \loss$, that is, a measurable map $v: \effDom \ \subdiff_1 \loss \to \stateSpace$, such that $v(\realState,\realPar) \in \subdiff_1 \loss(\realState, \realPar)$ for all $(\realState, \realPar) \in \stateSpace \times \parSpace$.
\end{corollary}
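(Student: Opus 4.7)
The plan is to invoke a classical measurable selection theorem, specifically the Kuratowski--Ryll-Nardzewski selection theorem (equivalently, \citet[Corollary~14.6]{Rockafellar_Wets_2009}), which guarantees that every closed-valued, measurable set-valued mapping into a Polish space with nonempty values admits a measurable selector. All the hypotheses will be supplied by Lemma~\ref{Lem:closed_valued_and_measurable} together with the definition of $\effDom\,\subdiff_1\loss$.

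First, I would apply Lemma~\ref{Lem:closed_valued_and_measurable} to obtain that the set-valued map $\subdiff_1\loss:\stateSpace\times\parSpace\rightrightarrows\stateSpace$ is closed-valued and measurable in the sense of \citet[Definition~14.1]{Rockafellar_Wets_2009}. By the definition of measurability of a set-valued map, the preimage of any open set is measurable; taking in particular the open set $\stateSpace$ shows that
$$
\effDom\,\subdiff_1\loss = (\subdiff_1\loss)^{-1}(\stateSpace) \in \borelalgebra{\stateSpace\times\parSpace}\,.
$$
Hence $\effDom\,\subdiff_1\loss$ is a measurable subset of $\stateSpace\times\parSpace$, and we may endow it with the trace $\sigma$-algebra.

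Next, I would restrict the mapping to $\effDom\,\subdiff_1\loss$. On this domain, the restricted mapping $\subdiff_1\loss|_{\effDom\,\subdiff_1\loss}$ is, by construction, nonempty-valued; it inherits closed-valuedness pointwise from Lemma~\ref{Lem:closed_valued_and_measurable}; and its measurability is preserved on the restricted domain since for every open $\set{O}\subset\stateSpace$ the preimage equals $(\subdiff_1\loss)^{-1}(\set{O})\cap\effDom\,\subdiff_1\loss$, which is measurable. Since $\stateSpace=\R^d$ is a Polish space, the Kuratowski--Ryll-Nardzewski selection theorem applies and yields a measurable function $v:\effDom\,\subdiff_1\loss\to\stateSpace$ with $v(\realState,\realPar)\in\subdiff_1\loss(\realState,\realPar)$ for all $(\realState,\realPar)\in\effDom\,\subdiff_1\loss$, which is precisely the claim.

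The only possible obstacle is matching conventions: \citet{Rockafellar_Wets_2009} formulates measurability of set-valued maps in a way compatible with Lemma~\ref{Lem:closed_valued_and_measurable}, and their Corollary~14.6 provides exactly the selection statement above (with target space $\R^d$, Polishness is automatic). So once the previous lemma is in hand, the corollary is essentially a direct citation, and no further delicate arguments are required.
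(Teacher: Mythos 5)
Your proposal is correct and follows essentially the same route as the paper: invoke Lemma~\ref{Lem:closed_valued_and_measurable} for closed-valuedness and measurability, then cite \citet[Corollary~14.6]{Rockafellar_Wets_2009} for the measurable selection. The extra care you take about the measurability of $\effDom \ \subdiff_1 \loss$ and the restriction to nonempty values is a welcome but minor elaboration of the paper's two-line argument.
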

\begin{proof}
    By Lemma~\ref{Lem:closed_valued_and_measurable}, the map $(\realState, \realPar) \mapsto \subdiff_1 \loss(\realState, \realPar)$ is closed-valued and measurable. Hence, the result follows directly from \citet[Corollary 14.6, p.647]{Rockafellar_Wets_2009}.
\end{proof}

Now, we can prove Lemma~\ref{lemma:measurability_relative_error_condition}:
\begin{proof}
    Again, we can restrict to $b \in \Q \cap (0, \infty) =: \Q_+$. Thus, $\set{A}_{\mathrm{err}}$ can be written as:
    $$
        \set{A}_{\mathrm{err}} = \left(\bigcup_{b \in \Q_+} \bigcap_{\timeIdx \in \N_0}  \set{B}_{b,\timeIdx} \right) \cap \left(\bigcap_{\timeIdx \in \N_0} \{ (\iter{\realState}{\timeIdx}, \realPar) \in \effDom \ \subdiff \loss \} \right) \,,
    $$
    where $\set{B}_{b,\timeIdx}$ is given by:
    $$
        \set{B}_{b,\timeIdx} := \Bigl\{ (\realPar, \trajAsIter) \in \parSpace \times \trajSpace \ : \  
        \norm{v(\iter{\realState}{\timeIdx + 1}, \realPar)}{} \le b \norm{ \iter{\realState}{\timeIdx + 1} - \iter{\realState}{\timeIdx}}{} \Bigr\}\,.
    $$
    Hence, since $\sigma$-algebras are stable under countable unions/intersections, we only have to show measurability of the sets $\set{B}_{b,\timeIdx}$ and $\{ (\iter{\realState}{\timeIdx}, \realPar) \in \effDom \ \subdiff_1 \loss \}$. Here, it holds that:
    $$
        \{ (\iter{\realState}{\timeIdx}, \realPar) \in \effDom \ \subdiff_1 \loss \} = \left( \Phi \circ (id, \iterProj{\timeIdx}) \right)^{-1} \left( \effDom \ \subdiff_1 \loss \right) \,,
    $$
    where $id$ is the identity on $\parSpace$, and $\Phi: \parSpace \times \stateSpace \to \stateSpace \times \parSpace$ just interchanges the coordinates.
    By Lemma~\ref{Lem:closed_valued_and_measurable}, $\effDom \ \subdiff_1 \loss$ is measurable, such that $\{ (\iter{\realState}{\timeIdx}, \realPar) \in \effDom \ \subdiff_1 \loss \}$ is measurable for each $\timeIdx \in \N_0$. Thus, it remains to show the measurability of the set $\set{B}_{b,\timeIdx}$.
    For this, introduce the function $g_b: (\effDom \ \subdiff_1 \loss)^2 \to \R, \ ((\realState_1, \realPar_1), (\realState_2, \realPar_2)) \mapsto \norm{v(\realState_2, \realPar_2)}{} - b \norm{ \realState_2 - \realState_1}{}$. Since $v$ is measurable, and the norm is continuous, we have that $g_b$ is measurable. With this, we can write the set $\set{B}_{b,k}$ as:
    \begin{align*}
        \set{B}_{b,\timeIdx} &= \{g_b(\iter{\realState}{\timeIdx}, \realPar, \iter{\realState}{\timeIdx + 1}, \realPar) \le 0\} \\
        &= \{\left(g_b \circ (\iterProj{\timeIdx}, id, \iterProj{\timeIdx + 1}, id) \circ \iota \right)(\realPar, \trajAsIter) \le 0\}  \\
        &= \left(g_b \circ (\iterProj{\timeIdx}, id, \iterProj{\timeIdx + 1}, id) \circ \iota \right)^{-1}(-\infty, 0] \,,
    \end{align*}
    where $\iota: \parSpace \times \trajSpace \to (\trajSpace \times \parSpace)^2$ is the diagonal inclusion $(\realState_1, \realState_2) \mapsto ((\realState_2, \realState_1), (\realState_2, \realState_1)$, which again is measurable. Thus, $\set{B}_{b,\timeIdx}$ is measurable for each $\timeIdx \in \N_0$ and $b \in \Q_+$ , which concludes the proof.
\end{proof}

\section{Proof of Lemma~\ref{lemma:bounded_sequences_are_measurable}}\label{proof:lemma:bounded_sequences_are_measurable}
\begin{proof}
    By definition of the product $\sigma$-algebra on $\parSpace \times \trajSpace$, it suffices to show that $\Tilde{\set{A}}_{\mathrm{bound}}$ is measurable. Then, as it suffices to consider $c \in [0, \infty) \cap \mathbb{Q} =: \Q_+$, one can write $\Tilde{\set{A}}_{\mathrm{bound}}$ as:
    \begin{align*}
        \Tilde{\set{A}}_{\mathrm{bound}} 
        &= \bigcup_{c \in \Q_+} \bigcap_{\timeIdx \in \N_0} \underbrace{\{ \trajAsIter \in \trajSpace \ : \ \norm{ \iter{\realState}{\timeIdx}}{} \le c \}}_{=: \set{C}_{c, \timeIdx}} \,.
    \end{align*}
    Thus, by the properties of a $\sigma$-algebra, it suffices to show that the sets $\set{C}_{c,\timeIdx}$ with $c \in \Q_+$ and $\timeIdx \in \N_0$ are measurable. By defining $g(\realState) = \norm{\realState}{}$, this follows directly from the identity $\set{C}_{c, \timeIdx} = \left( g \circ \iterProj{\timeIdx} \right)^{-1} [0, c]$.
\end{proof}

\section{Architecture of the Algorithm for Quadratic Problems}\label{Appendix:exp_quad}
\definecolor{test_1}{RGB}{241, 241, 242} 
\definecolor{test_2}{RGB}{25, 149, 173} 
\definecolor{test_3}{RGB}{161, 214, 226} 
\definecolor{test_4}{RGB}{188, 186, 190} 

\begin{figure}[h!]
    \centering
    \begin{tikzpicture}
        \tikzset{conv/.style={black,draw=black,fill=test_1,rectangle,minimum height=0.5cm}}
        \tikzset{linear/.style={black,draw=black,fill=test_2,rectangle,minimum height=0.5cm}}
        \tikzset{relu/.style={black,draw=black,fill=test_3,rectangle,minimum height=0.25cm}}

        \node (d1) at (0, 1.0) {\tiny{$\iter{d}{\timeIdx}_1$}};
        \node (d2) at (0, 0.0) {\tiny{$\iter{d}{\timeIdx}_2$}};
        \node (d3) at (-0.4, -1.0) {\tiny{$\iter{d}{\timeIdx}_1 \odot \iter{d}{\timeIdx}_2$}};

        \node[conv,rotate=90,minimum width=3cm] (conv1) at (2,0) {\tiny{\texttt{Conv2d(3,30,1,bias=F)}}};
        \node[conv,rotate=90,minimum width=3cm] (conv2) at (2.5,0) {\tiny{\texttt{Conv2d(30,30,1,bias=F)}}};
        \node[relu,rotate=90,minimum width=3cm] (relu1) at (3.0,0) {\tiny{\texttt{ReLU}}};
        
        \node[conv,rotate=90,minimum width=3cm] (conv3) at (3.5,0) {\tiny{\texttt{Conv2d(30,20,1,bias=F)}}};
        \node[relu,rotate=90,minimum width=3cm] (relu2) at (4,0) {\tiny{\texttt{ReLU}}};

        \node[conv,rotate=90,minimum width=3cm] (conv4) at (4.5,0) {\tiny{\texttt{Conv2d(20,10,1,bias=F)}}};
        \node[relu,rotate=90,minimum width=3cm] (relu3) at (5.,0) {\tiny{\texttt{ReLU}}};

        \node[conv,rotate=90,minimum width=3cm] (conv5) at (5.5,0) {\tiny{\texttt{Conv2d(10,10,1,bias=F)}}};
        \node[conv,rotate=90,minimum width=3cm] (conv6) at (6,0) {\tiny{\texttt{Conv2d(10,1,1,bias=F)}}};

        \node (d) at (8, 0) {\tiny{$\iter{d}{\timeIdx}$}};

        \draw[->] (d1) to [out=0, in=180] (conv1.north);
        \draw[->] (d2) -- (conv1);
        \draw[->] (d3) to [out=0, in=180] (conv1.north);

        \draw[-] (conv1) -- (conv2) -- (relu1) -- (conv3) -- (relu2) -- (conv4) -- (relu3) -- (conv5) -- (conv6);
        \draw[->] (conv6) -- (d);

        \node (n1) at (0, -2.5) {\tiny{$\iter{s}{\timeIdx}_1$}};
        \node (n2) at (0, -3.) {\tiny{$\iter{s}{\timeIdx}_2$}};
        \node (n3) at (0, -3.5) {\tiny{$\iter{s}{\timeIdx}_3$}};
        \node (n4) at (0, -4.) {\tiny{$\iter{s}{\timeIdx}_4$}};

        \node[linear,rotate=90,minimum width=3cm] (lin1) at (2, -3.25) {\tiny{\texttt{Linear(4,30,bias=F)}}};
        \node[linear,rotate=90,minimum width=3cm] (lin2) at (2.5, -3.25) {\tiny{\texttt{Linear(30,30,bias=F)}}};
        \node[relu,rotate=90,minimum width=3cm] (relu2_1) at (3., -3.25) {\tiny{\texttt{ReLU}}};

        \node[linear,rotate=90,minimum width=3cm] (lin3) at (3.5, -3.25) {\tiny{\texttt{Linear(30,20,bias=F)}}};
        \node[relu,rotate=90,minimum width=3cm] (relu2_2) at (4., -3.25) {\tiny{\texttt{ReLU}}};

        \node[linear,rotate=90,minimum width=3cm] (lin4) at (4.5, -3.25) {\tiny{\texttt{Linear(20,10,bias=F)}}};
        \node[relu,rotate=90,minimum width=3cm] (relu2_3) at (5., -3.25) {\tiny{\texttt{ReLU}}};

        \node[linear,rotate=90,minimum width=3cm] (lin5) at (5.5, -3.25) {\tiny{\texttt{Linear(10,10,bias=F)}}};
        \node[linear,rotate=90,minimum width=3cm] (lin6) at (6., -3.25) {\tiny{\texttt{Linear(10,1,bias=F)}}};

        \node (s) at (7, -3.25) {\tiny{$\iter{\beta}{\timeIdx}$}};

        \draw[->] (n1) to [out=0, in=180] (lin1.north);
        \draw[->] (n2) to [out=0, in=180] (lin1.north);
        \draw[->] (n3) to [out=0, in=180] (lin1.north);
        \draw[->] (n4) to [out=0, in=180] (lin1.north);

        \draw[-] (lin1) -- (lin2) -- (relu2_1) -- (lin3)  -- (relu2_2) -- (lin4) -- (relu2_3) -- (lin5) -- (lin6);
        \draw[->] (lin6) -- (s);

        \node (xk) at (10, 0) {\tiny{$\iter{\realState}{\timeIdx}$}};
        \node (new) at (10, -1.5) {\tiny{$\iter{\realState}{\timeIdx + 1} := \iter{\realState}{\timeIdx} + \iter{\beta}{\timeIdx} \cdot \iter{d}{\timeIdx}$}};
        \draw[->] (d) to [out=0, in=90] (new.north);
        \draw[->] (xk) -- (new);
        \draw[->] (s) to [out=0, in=270] (new.south);
        
    \end{tikzpicture}
    \caption{Update step of $\mathcal{A}$: The directions $\iter{d}{\timeIdx}_1$, $\iter{d}{\timeIdx}_2$ and $\iter{d}{\timeIdx}_1 \odot \iter{d}{\timeIdx}_2$ are inserted as different channels into the \texttt{Conv2d}-block, which performs $1 \times 1$ \say{convolutions}, that is, the algorithm acts coordinate-wise on the input. The scales $\iter{s}{\timeIdx}_1$, ..., $\iter{s}{t}_4$ get transformed separately by the fully-connected block.}
\end{figure}
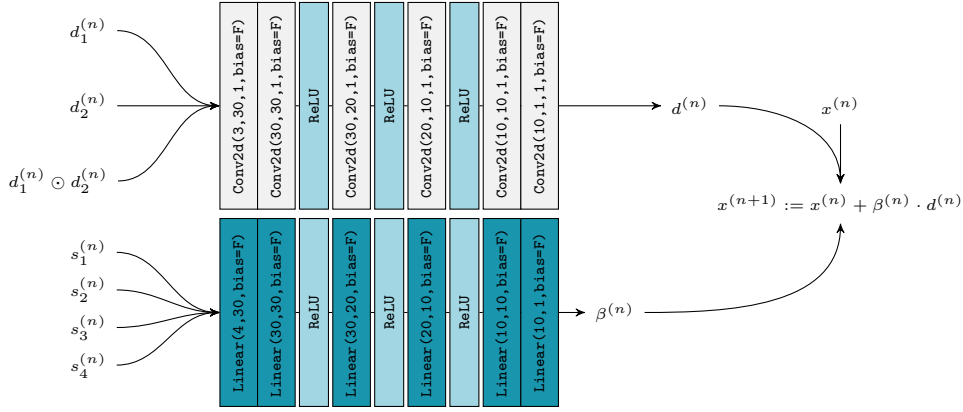
The algorithmic update is adopted from \citet{Sucker_Ochs_2024_Markov} and consists of two blocks: \begin{itemize}
    \item[1)] The first block consists of $1\times 1$-convolutional layers with ReLU-activation functions and computes the update direction $\iter{d}{\timeIdx}$. As features, we use the normalized gradient $\iter{d}{\timeIdx}_1 := \frac{\nabla \loss(\iter{\realState}{\timeIdx}, \realPar)}{\norm{\nabla \loss(\iter{\realState}{\timeIdx}, \realPar)}{}}$, the normalized momentum term $\iter{d}{\timeIdx}_2 := \frac{\iter{\realState}{\timeIdx} - \iter{\realState}{\timeIdx-1}}{\norm{\iter{\realState}{\timeIdx} - \iter{\realState}{\timeIdx-1}}{}}$, and their coordinate-wise product $\iter{d}{\timeIdx}_1 \odot \iter{d}{\timeIdx}_2$. The normalization is done to stabilize the training. 
    \item[2)] The second block consists of linear layers with ReLU-activation functions and computes the step-size $\iter{\beta}{\timeIdx}$. As features, we use the (logarithmically transformed) gradient norm $\iter{s}{\timeIdx}_1 := \log \left( 1 + \norm{\nabla \loss(\iter{\realState}{\timeIdx}, \realPar)}{}\right)$, the (logarithmically transformed) norm of the momentum term $\iter{s}{\timeIdx}_2 := \log \left( 1 + \norm{\iter{\realState}{\timeIdx} - \iter{\realState}{\timeIdx-1}}{}\right)$, and the current and previous (logarithmically transformed) losses $\iter{s}{\timeIdx}_3 := \log \left( 1 + \loss(\iter{\realState}{\timeIdx}, \realPar)\right)$, $\iter{s}{\timeIdx}_4 := \log \left( 1 + \loss(\iter{\realState}{\timeIdx-1}, \realPar)\right)$. Again, the logarithmic scaling is done to stabilize training. Here, the term \say{+1} is added to map zero onto zero.
\end{itemize}  
Importantly, we want to stress that the algorithmic update is not constrained in any way: the algorithm just predicts a direction and a step-size, and we do not enforce them to have any specific properties.

\section{Training of the Algorithm}
For training, we mainly use the procedure proposed (and described in detail) by \citet{Sucker_Fadili_Ochs_2024, Sucker_Ochs_2024_Markov}. For completeness, we briefly summarize it here: In the outer loop, we sample a loss-function $\loss(\cdot, \realPar)$ randomly from the training set. Then, in the inner loop, we train the algorithm on this loss-function with $\loss_{\mathrm{train}}$ given by
$$
    \loss_{\mathrm{train}}(\realHyp, \realPar, \iter{\realState}{\timeIdx}) = \mathds{1}\{ \loss(\iter{\realState}{\timeIdx}, \realPar) > 0\} \frac{\loss(\iter{\realState}{\timeIdx+1}, \realPar)}{\loss(\iter{\realState}{\timeIdx}, \realPar)} \cdot \mathds{1}_{\set{C}^c}(\realPar, \iter{\realState}{\timeIdx})\,,
$$
where $\set{C} := \{(\realPar, \realState) \in \parSpace \times \stateSpace \ : \ \loss(\realState, \realPar) < 10^{-16}\}$ is the convergence set. 
That is, in each iteration the algorithm computes a new point and observes the loss $\loss_{\mathrm{train}}$, which is used to update its hyperparameters. 
We run this procedure for $150\cdot 10^3$ iterations. This yields hyperparameters $\iter{\realHyp}{0} \in \hypSpace$, such that $\algo(\iter{\realHyp}{0}, \cdot, \cdot)$ has a good performance. However, typically, it is not a descent method yet, that is, $\prob_{(\rvPar, \rvTraj) \vert \rvHyp = \iter{\realHyp}{0}}\{\set{A}\}$ is small, such that the PAC-bound would be useless. Therefore, we employ the probabilistic constraining procedure proposed (and described in detail) by \cite{Sucker_Fadili_Ochs_2024} in a progressive way: Starting from $\iter{\realHyp}{0}$, we try to find a sequence of hyperparameters $\iter{\realHyp}{1}, \iter{\realHyp}{2}, ...$, such that 
$$
    \prob_{(\rvPar, \rvTraj) \vert \rvHyp = \iter{\realHyp}{0}}\{\set{A}\} < \prob_{(\rvPar, \rvTraj) \vert \rvHyp = \iter{\realHyp}{1}}\{\set{A}\} < \prob_{(\rvPar, \rvTraj) \vert \rvHyp = \iter{\realHyp}{2}}\{\set{A}\} < ... \,.
$$
\begin{remark}
    The notation $\prob_{(\rvPar, \rvTraj) \vert \rvHyp}\{\set{A}\}$ is not entirely correct and is rather to be understood suggestively, as the final prior distribution $\prob_\rvHyp$ is yet to be constructed. However, we think that it is easier to understand this way and therefore allow for this inaccuracy.
\end{remark}
For this, we test the probabilistic constraint every $1000$ iterations, that is: Given $\iter{\realHyp}{i}$, we train the algorithm (as before) for another $1000$ iterations, which yields a candidate $\iter{\Tilde{\realHyp}}{i+1}$. If $\prob_{(\rvPar, \rvTraj) \vert \rvHyp = \iter{\realHyp}{i}}\{\set{A}\} < \prob_{(\rvPar, \rvTraj) \vert \rvHyp = \iter{\Tilde{\realHyp}}{i+1}}\{\set{A}\}$, we accept $\iter{\realHyp}{i+1} := \iter{\Tilde{\realHyp}}{i+1}$, otherwise we reject it and start again from $\iter{\realHyp}{i}$. This finally yields some hyperparameters $\realHyp_0$ that have a good performance and such that $\prob_{(\rvPar, \rvTraj) \vert \rvHyp = \realHyp_0}\{\set{A}\}$ is large enough (here: about 90\%). Then, starting from $\realHyp_0$, we construct the \emph{actual} discrete prior distribution $\prob_\rvHyp$ over points $\realHyp_1, ..., \realHyp_{n_{\mathrm{sample}}} \in \hypSpace$, by a sampling procedure. Finally, we perform the (closed-form) PAC-Bayesian optimization step, which yields the posterior $\rho^* \in \probMeasures{\prob_\rvHyp}$. In the end, for simplicity, we set the hyperparameters to
$$
    \realHyp^* = \argmax_{i=1,...,n_{\mathrm{sample}}} \rho^* \{ \realHyp_i \} \,.
$$
For the construction of the prior, we use $N_{\mathrm{prior}} = 500$ functions, for the probabilistic constraint we use $N_{\mathrm{val}} = 500$ functions, and for the PAC-Bayesian optimization step we use $N_{\mathrm{train}} = 250$ functions, all of which are sampled i.i.d., that is, the data sets are independent of each other.
\begin{remark}
    Training the algorithm to yield a good performance is comparably easy. On the other hand, turning it into an algorithm, such that $\prob_{(\rvPar, \rvTraj) \vert \rvHyp}\{\set{A}\}$ is large enough (in our case: a descent method without enforcing it geometrically) is challenging and, unfortunately, not guaranteed to work. Nevertheless, it is key to get a meaningful guarantee.
\end{remark}

\section{Architecture of the Algorithm for Training the Neural Network}\label{Appendix:exp_nn}

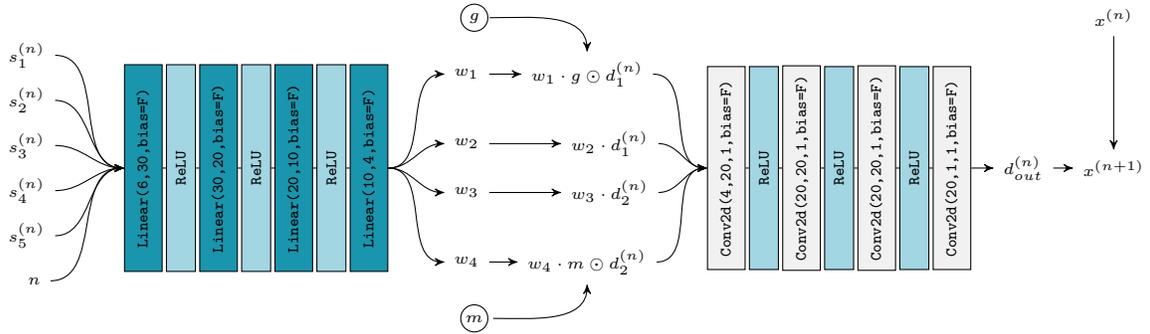
\begin{figure}[h!]
    \centering
    \begin{tikzpicture}
        \tikzset{conv/.style={black,draw=black,fill=test_1,rectangle,minimum height=0.5cm}}
        \tikzset{linear/.style={black,draw=black,fill=test_2,rectangle,minimum height=0.5cm}}
        \tikzset{relu/.style={black,draw=black,fill=test_3,rectangle,minimum height=0.25cm}}

        \node (n1) at (-4.05, 1.5) {\tiny{$\iter{s}{\timeIdx}_1$}};
        \node (n2) at (-4.05, 0.9) {\tiny{$\iter{s}{\timeIdx}_2$}};
        \node (l) at (-4.05, 0.3) {\tiny{$\iter{s}{\timeIdx}_3$}};
        \node (mg) at (-4.05, -0.3) {\tiny{$\iter{s}{\timeIdx}_4$}};
        \node (sp) at (-4.05, -0.9) {\tiny{$\iter{s}{\timeIdx}_5$}};
        \node (t) at (-3.95, -1.5) {\tiny{$\timeIdx$}};

        \node[linear,rotate=90,minimum width=3cm] (lin1) at (-2.5, 0) {\tiny{\texttt{Linear(6,30,bias=F)}}};
        \node[relu,rotate=90,minimum width=3cm] (relu2_1) at (-2.0, 0) {\tiny{\texttt{ReLU}}};

        \node[linear,rotate=90,minimum width=3cm] (lin2) at (-1.5, 0) {\tiny{\texttt{Linear(30,20,bias=F)}}};
        \node[relu,rotate=90,minimum width=3cm] (relu2_2) at (-1.0, 0) {\tiny{\texttt{ReLU}}};
        
        \node[linear,rotate=90,minimum width=3cm] (lin3) at (-0.5, 0) {\tiny{\texttt{Linear(20,10,bias=F)}}};
        \node[relu,rotate=90,minimum width=3cm] (relu2_3) at (0, 0) {\tiny{\texttt{ReLU}}};

        \node[linear,rotate=90,minimum width=3cm] (lin4) at (0.5, 0) {\tiny{\texttt{Linear(10,4,bias=F)}}};

        \draw[->] (n1) to [out=0, in=180] (lin1.north);
        \draw[->] (n2) to [out=0, in=180] (lin1.north);
        \draw[->] (l) to [out=0, in=180] (lin1.north);
        \draw[->] (mg) to [out=0, in=180] (lin1.north);
        \draw[->] (sp) to [out=0, in=180] (lin1.north);
        \draw[->] (t) to [out=0, in=180] (lin1.north);

        \node (s1) at (1.8, 1.25) {\tiny{$w_1$}};
        \node (s2) at (1.8, 0.325) {\tiny{$w_2$}};
        \node (s3) at (1.8, -0.325) {\tiny{$w_3$}};
        \node (s4) at (1.8, -1.25) {\tiny{$w_4$}};

        \draw[->] (lin4.south) to [out=0, in=180] (s1.west);
        \draw[->] (lin4.south) to [out=0, in=180] (s2.west);
        \draw[->] (lin4.south) to [out=0, in=180] (s3.west);
        \draw[->] (lin4.south) to [out=0, in=180] (s4.west);

        \draw[-] (lin1) -- (relu2_1) -- (lin2) -- (relu2_2) -- (lin3) -- (relu2_3) -- (lin4);

        \draw (1.9, 2.) circle [radius=0.18] node (g) {\tiny{$g$}};
        \draw (1.9, -2.) circle [radius=0.18] node (m) {\tiny{$m$}};
        \node (d1) at (3.4, 1.25) {\tiny{$w_1 \cdot g \odot \iter{d}{\timeIdx}_1$}};
        \node (d2) at (3.7, 0.325) {\tiny{$w_2 \cdot \iter{d}{\timeIdx}_1$}};
        \node (d3) at (3.7, -0.325) {\tiny{$w_3 \cdot \iter{d}{\timeIdx}_2$}};
        \node (d4) at (3.4, -1.25) {\tiny{$w_4 \cdot m \odot \iter{d}{\timeIdx}_2$}};

        \draw[->] (g) to [out=0,in=90] (d1.north);
        \draw[->] (s1) to (d1.west);
        \draw[->] (s2) to (d2.west);
        \draw[->] (s3) to (d3.west);
        \draw[->] (s4) to (d4.west);
        \draw[->] (m) to [out=0, in=270] (d4.south);

        \node[conv,rotate=90,minimum width=3cm] (conv1) at (5.25,0) {\tiny{\texttt{Conv2d(4,20,1,bias=F)}}};
        \node[relu,rotate=90,minimum width=3cm] (relu1) at (5.75,0) {\tiny{\texttt{ReLU}}};
        
        \node[conv,rotate=90,minimum width=3cm] (conv2) at (6.25,0) {\tiny{\texttt{Conv2d(20,20,1,bias=F)}}};
        \node[relu,rotate=90,minimum width=3cm] (relu2) at (6.75,0) {\tiny{\texttt{ReLU}}};

        \node[conv,rotate=90,minimum width=3cm] (conv3) at (7.25,0) {\tiny{\texttt{Conv2d(20,20,1,bias=F)}}};
        \node[relu,rotate=90,minimum width=3cm] (relu3) at (7.75,0) {\tiny{\texttt{ReLU}}};

        \node[conv,rotate=90,minimum width=3cm] (conv4) at (8.25,0) {\tiny{\texttt{Conv2d(20,1,1,bias=F)}}};

        \node (d) at (9.2, 0) {\tiny{$\iter{d}{\timeIdx}_{\mathrm{out}}$}};

        \node (xk) at (10.4, 2.) {\tiny{$\iter{\realState}{\timeIdx}$}};
        
        \draw[->] (d1) to [out=0, in=180] (conv1.north);
        \draw[->] (d2) to [out=0, in=180] (conv1.north);
        \draw[->] (d3) to [out=0, in=180] (conv1.north);
        \draw[->] (d4) to [out=0, in=180] (conv1.north);

        \draw[-] (conv1) -- (relu1) -- (conv2) -- (relu2) -- (conv3) -- (relu3) -- (conv4);
        \draw[->] (conv4) -- (d);

        \node (new) at (10.4, 0) {\tiny{$\iter{\realState}{\timeIdx+1}$}};
        \draw[->] (xk) to [out=270, in=90] (new.north);
        \draw[->] (d) -- (new);
        
    \end{tikzpicture}
    \caption{Algorithmic update for training the neural network: Based on the given six features, the first block computes four weights $w_1, ..., w_4$, which are used to perform a weighting of the different directions $g \odot \iter{d}{\timeIdx}_1$, $\iter{d}{\timeIdx}_1$, $\iter{d}{\timeIdx}_2$, $m \odot \iter{d}{\timeIdx}_2$, which are used in the second block. This second block consists of a 1x1-convolutional blocks, which compute an update direction $\iter{d}{\timeIdx}_{\mathrm{out}}$. Then, we update $\iter{\realState}{\timeIdx + 1} := \iter{\realState}{\timeIdx} + \iter{d}{\timeIdx}_{\mathrm{out}} / \sqrt{\timeIdx}$.}
\end{figure}
The algorithmic update is adopted from \citet{Sucker_Ochs_2024_Markov} and consists of two blocks: \begin{itemize}
    \item[1)] The first block consists of linear layers with ReLU-activation functions and computes four weights $w_1, ..., w_4$. As features, we use the (logarithmically transformed) gradient norm $\iter{s}{\timeIdx}_1 := \log \left( 1 + \norm{\nabla \loss(\iter{\realState}{\timeIdx}, \realPar)}{}\right)$, the (logarithmically transformed) norm of the momentum term $\iter{s}{\timeIdx}_2 := \log \left( 1 + \norm{\iter{\realState}{\timeIdx} - \iter{\realState}{\timeIdx-1}}{}\right)$, 
    the difference between the current and previous loss $\iter{s}{\timeIdx}_3 := \loss(\iter{\realState}{\timeIdx}, \realPar) - \loss(\iter{\realState}{\timeIdx-1}, \realPar)$, 
    the scalar product between the (normalized) gradient and the (normalized) momentum term $\iter{s}{\timeIdx}_4$, the maximal absolute value of the coordinates of the gradient $\iter{s}{\timeIdx}_5$, and the iteration counter $\timeIdx$.
    \item[2)] The second block consists of $1\times 1$-convolutional layers with ReLU-activation functions and computes the update direction $\iter{d}{\timeIdx}_{\mathrm{out}}$. As features, we use the normalized gradient $\iter{d}{\timeIdx}_1 := \frac{\nabla \loss(\iter{\realState}{\timeIdx}, \realPar)}{\norm{\nabla \loss(\iter{\realState}{\timeIdx}, \realPar)}{}}$, the normalized momentum term $\iter{d}{\timeIdx}_2 := \frac{\iter{\realState}{\timeIdx} - \iter{\realState}{\timeIdx-1}}{\norm{\iter{\realState}{\timeIdx} - \iter{\realState}{\timeIdx-1}}{}}$, and their \say{preconditioned} versions $g \odot \iter{d}{\timeIdx}_1$ and $m \odot \iter{d}{\timeIdx}_2$, where the weights $m, d \in \R^d$ are learned, too.
\end{itemize}  
Again, we want to stress that the algorithmic update is not constrained in any way: the algorithm just predicts a direction, and we do not enforce them to have any specific properties.

\newpage
\bibliography{bibliography}

\end{document}